\newcommand{\Clip}[2]{{\rm Clip}_{#1}\left( #2 \right)}
\newcommand{\Normalize}[2]{{\rm Norm}_{#1}\left( #2 \right)}
\newcommand{\norm}[1]{\left\| #1 \right\|}
\newcommand{\sqnorm}[1]{\left\| #1 \right\|^2}
\newcommand{\inp}[2]{\left\langle#1,#2\right\rangle} 
\newcommand\br[1]{\left( #1 \right)}
\newcommand{\cB}{\mathcal{B}}
\newcommand{\cC}{\mathcal{C}}
\newcommand{\cF}{\mathcal{F}}
\newcommand{\cO}{\mathcal{O}}
\newcommand{\del}[1]{}
\definecolor{junglegreen}{rgb}{0.16, 0.67, 0.53}
\definecolor{lasallegreen}{rgb}{0.03, 0.47, 0.19}
\definecolor{midnightblue}{HTML}{0059b3}
\newcommand{\algname}[1]{\textcolor{midnightblue!70!black}{\small\sf#1}}
\newcommand{\R}{\mathbb{R}} 
\newcommand{\eqdef}{:=} 
\newcommand{\Exp}[1]{{\rm E}\left[#1\right]}
\newcommand{\ExpCond}[2]{{\rm E}\left[\left.#1\right\vert#2\right]}
\newtheorem{assumption}{Assumption}
\newtheorem{lemma}{Lemma}
\newtheorem{theorem}{Theorem}
\newtheorem{example}{Example}
\newtheorem{corollary}{Corollary}
\theoremstyle{plain}
\theoremstyle{definition}
\definecolor{red}{rgb}{1.0, 0.01, 0.24}
\definecolor{darkscarlet}{rgb}{0.34, 0.01, 0.1}
\definecolor{yaleblue}{rgb}{0.06, 0.3, 0.57}
\definecolor{darkpowderblue}{rgb}{0.0, 0.2, 0.6}
\definecolor{midnightblue}{HTML}{0059b3}
\definecolor{noonblue}{HTML}{e5eef7}
\definecolor{chromered}{HTML}{f14233}
\definecolor{olivedrab}{HTML}{6b8e23}
\renewcommand*{\backrefalt}[4]{%
    \ifcase #1 \footnotesize{(Not cited.)}%
    \or        \footnotesize{(Cited on page~#2)}%
    \else      \footnotesize{(Cited on pages~#2)}%
    \fi}
\newcommand{\alg}{\textcolor{midnightblue!70!black}{\small\sf $\alpha$-NormEC}\xspace}
\renewcommand{\paragraph}[1]{\vspace{4pt}\textbf{#1}\quad} 
\title{\textbf{Smoothed Normalization for Efficient \\  Distributed Private Optimization}}
\author{
Egor Shulgin \qquad Sarit Khirirat \qquad Peter Richt{\'a}rik \\
\phantom{x}
    \\
    King Abdullah University of Science and Technology (KAUST) \\
    Thuwal, Saudi Arabia
}
\date{}
\begin{document}

\maketitle

\begin{abstract}
Federated learning enables training machine learning models while preserving the privacy of participants. Surprisingly, there is no differentially private distributed method for smooth, non-convex optimization problems. The reason is that standard privacy techniques require bounding the participants' contributions, usually enforced via \textit{clipping} of the updates. Existing literature typically ignores the effect of clipping by assuming the boundedness of gradient norms or analyzes distributed algorithms with clipping but ignores DP constraints. In this work, we study an alternative approach via \textit{smoothed normalization} of the updates motivated by its favorable performance in the single-node setting. By integrating smoothed normalization with an error-feedback mechanism, we design a new distributed algorithm \alg. We prove that our method achieves a superior convergence rate over prior works. By extending \alg to the DP setting, we obtain the first differentially private distributed optimization algorithm with provable convergence guarantees.
Finally, our empirical results from neural network training indicate robust convergence of \alg across different parameter settings.
\end{abstract}

\section{Introduction}\label{sec:intro}

Federated Learning (FL) has become a viable approach for distributed collaborative 
training of machine learning models \citep{konecny2017federated, mcmahan2017communication, mcmahan2018learning}.  
This growing interest has spurred the development of novel distributed optimization methods tailored for FL, focusing on 
ensuring high \textit{communication efficiency}~\citep{kairouz2021advances}.
Although FL optimization methods ensure that private data is never directly transmitted, \citet{boenisch2023curious} demonstrated that the global models produced through FL can still enable the reconstruction of participants' data.
Therefore, it is essential to study \emph{differentially private}  distributed optimization methods for differentially private training \citep{dwork2014algorithmic, mcmahan2018learning,sun2019can}.

To mitigate emerging privacy risks in FL, differential privacy (DP)~\citep{dwork2014algorithmic} has become the standard for providing theoretical privacy guarantees in machine learning. 
DP is often enforced by a clipping operator. 
It bounds gradient sensitivity, allowing the addition of DP noise to the updates before communication.
While gradient clipping enables DP as in Differentially Private Stochastic Gradient Descent (\algname{DP-SGD})~\citep{abadi2016deep}, it also introduces a bias that can impede convergence
~\citep{chen2020understanding, koloskova2023revisiting}. 
Often, distributed DP gradient methods with clipping have been studied under assumptions that are unrealistic for heterogeneous FL environments, such as 
bounded gradient norms~\citep{li2022soteriafl,pmlr-v216-wang23b,lowy2023private,zhang2020private}, which effectively ignore the impact of clipping bias.
To our knowledge, convergence guarantees for distributed DP methods remain elusive unless the impact of clipping bias is explicitly considered.

Error Feedback (EF), also known as Error Compensation (EC), such as \algname{EF21} \citep{richtarik2021ef21} has been employed to alleviate the clipping bias and achieve strong convergence for non-private distributed methods with gradient clipping, as shown by~\citet{khirirat2023clip21, yu2023smoothed}.
However, extending these methods to the private setting remains an open problem. 
Furthermore, optimizing the convergence of distributed DP clipping methods is challenging because the clipping threshold significantly influences both the convergence speed and  DP noise variance.
Extensive grid search for the optimal clipping threshold is computationally expensive~\citep{andrew2021differentially} and leads to additional privacy loss~\citep{papernot2021hyperparameter}. Two major approaches have emerged to address the need to manually tune the clipping threshold. The first is to use adaptive clipping techniques, such as adaptive quantile clipping, initially proposed by \citet{andrew2021differentially} and further analyzed by \citet{merad2023robust,shulgin2024convergence}. The second, which is the focus of this paper, is to replace clipping with a normalization operator.

{Smoothed normalization}, introduced by \citet{bu2024automatic, yang2022normalized}, is the normalization operator that offers an alternative to clipping.
Unlike clipping, smoothed normalization eliminates the need to tune the clipping threshold. By ensuring that the Euclidean norm of the normalized gradient is bounded above by one, smoothed normalization guarantees robust performance of \algname{DP-SGD} in convergence and privacy. 
However, very limited literature characterizes properties of smoothed normalization and a rigorous convergence analysis for \algname{DP-SGD} using this operator, {especially in the distributed setting}. While the method has been studied {in the single-node setting} by \citet{bu2024automatic} and \citet{yang2022normalized}, the convergence results rely on {unrealistic and/or restrictive} assumptions, such as symmetric gradient noise~\citep{bu2024automatic} and almost sure bounds on the gradient noise variance~\citep{yang2022normalized}.

\subsection{Contributions}
We propose \alg, the distributed gradient method that uses smoothed normalization and error compensation.
Our method provides the first provable convergence guarantees in the DP setting without bounded gradient norm assumptions typically imposed in prior works.  
Our contributions are summarized below:

\paragraph{$\bullet$ Favorable properties of smoothed normalization.}
In Section~\ref{sec:not_def}, we present the novel properties of smoothed normalization. 
We show that smoothed normalization enjoys a \say{contractive} property similar to biased compression operators \citep{beznosikov2023biased} widely used for reducing communication in distributed learning. This property essentially allows for analyzing \alg without ignoring the impact of smoothed normalization. 

\paragraph{$\bullet$ Convergence for non-convex, smooth problems without bounded gradient norm assumptions.} 
In Section~\ref{sec:norm21}, we prove that 
\alg achieves the optimal convergence rate \citep{carmon2020lower} 
for minimizing non-convex, smooth functions without imposing additional restrictive assumptions, such as bounded gradient norms or bounded heterogeneity.
Furthermore, \alg achieves a faster convergence rate than  \algname{Clip21}~\citep{khirirat2023clip21}, where its step size needs to know the inaccessible value of $f(x^0)-f^{\inf}$.

\paragraph{$\bullet$ 
The first provable convergence in the private setting under standard assumptions.}
In Section~\ref{sec:dp_norm21}, we extend \alg to the differential privacy (DP) setting. 
Specifically, \alg achieves the first convergence guarantees for DP, non-convex, smooth problems \emph{without} ignoring the bias introduced by smoothed normalization. 
This is the first provably efficient distributed method in the DP setting under standard assumptions, thus addressing the theoretical gap left by prior work such as by \citet{khirirat2023clip21,yu2023smoothed}, which did not adapt distributed gradient clipping methods for private training.

\paragraph{$\bullet$ Robust empirical performance  of \alg.} 
In Section~\ref{sec:exp}, we verify the theoretical benefits of \alg in both non-private and private training via experiments on the image classification task with the CIFAR-10 dataset using the ResNet20 model. 
Our algorithm demonstrates robust convergence across different parameter values and benefits from error compensation that enables superior performance over vanilla distributed gradient normalization methods (such as  \algname{DP-SGD}).
In the private training, server normalization enhances the robustness of \algname{DP-}\alg across tuning parameters. Finally, \algname{DP-}\alg without server normalization outperforms \algname{DP-Clip21}.

\section{Related Work}\label{sec:related_work}

\paragraph{Clipping and normalization.} 
Clipping and normalization
address many key challenges in machine learning. 
They mitigate the problem of exploding gradients in recurrent neural networks \citep{pascanu2013difficulty}, enhance neural network training for tasks in natural language processing \citep{merity2017regularizing,brown2020language} and computer vision \citep{brock2021high}, enable differentially private machine learning \citep{abadi2016deep,mcmahan2018learning}, and provide robustness in the presence of misbehaving or adversarial workers \citep{karimireddy2021learning,ozfatura2023byzantines,malinovsky2023byzantine}. 
In this paper, we consider smoothed normalization, introduced by \citet{yang2022normalized, bu2024automatic}, as an alternative to clipping, given its robust empirical performance and hyperparameter tuning benefits in the DP setting.

\paragraph{Private optimization methods.} 
\algname{DP-SGD}~\citep{abadi2016deep} is the standard distributed first-order method that achieves the DP guarantee by clipping  the gradient
before adding noise scaled with the clipped gradient's sensitivity. 
However, existing \algname{DP-SGD} convergence analyses often neglect the clipping bias. 
Specifically, convergence results for smooth functions under differential privacy often require either the assumption of bounded gradient norms~\citep{zhang2020private,li2022soteriafl,zhang2022understanding,pmlr-v216-wang23b,lowy2023private, murata2023diff2, wang2024efficient} or conditions where clipping is effectively inactive~\citep{zhang2024private,noble2022differentially}. 
Thus, the convergence behavior of \algname{DP-SGD} in the presence of clipping bias remains poorly understood.

\paragraph{Single-node non-private methods with clipping. } 
The impact of clipping bias has been extensively studied in single-node gradient methods for non-private optimization. 
Numerous works have shown strong convergence guarantees of clipped gradient methods under various conditions, including nonsmooth, rapidly growing convex functions~\citet{shor2012minimization,ermoliev1988stochastic,alber1998projected}, generalized smoothness~\citep{zhang2019gradient,koloskova2023revisiting,gorbunov2024methods,vankov2024optimizing,lobanov2024linear,hubler2024parameter}, and heavy-tailed noise~\citep{gorbunov2020stochastic,nguyen2023improved,gorbunov2024highprobability,hubler2024gradient,chezhegov2024gradient}.

\paragraph{Distributed non-private methods with clipping.}
Applying gradient clipping in the distributed setting is challenging.
Existing convergence analyses often rely on bounded heterogeneity assumptions, which often do not hold in cases of arbitrary data heterogeneity.
For example, federated optimization methods with clipping have been analyzed under the bounded difference between the local and global gradients~\citep{wei2020federated,liu2022communication,crawshaw2023episode,li2024an}.
However, even in the non-private setting, these distributed clipping methods do not converge for simple problems~\citep{chen2020understanding,khirirat2023clip21}. 
To address the convergence issue, one approach is to use error feedback mechanisms, such as \algname{EF21} \citep{richtarik2021ef21}, that are employed by \citet{khirirat2023clip21,yu2023smoothed} to compute local gradient estimators and alleviate clipping bias. However, these distributed clipping methods using error feedback are limited to the non-private setting, and extending them to the DP setting is still an open problem.
In this paper, we propose a distributed method that replaces clipping with smoothed normalization in the \algname{EF21} mechanism. 
Our method provides the first provable convergence in the DP setting and empirically outperforms the distributed version of \algname{DP-SGD} with smoothed normalization \citet{bu2024automatic, yang2022normalized}, a special case of \citet{das2021convergence}.

\paragraph{Error feedback.} 
Error feedback, or error compensation, has been applied to improve the convergence of distributed methods with gradient compression for communication-efficient learning.
First introduced by \citet{seide20141}, \algname{EF14} was extensively analyzed for first-order methods  in both single-node ~\citep{stich2018sparsified,karimireddy2019error,stich2019error,khirirat2019convergence} and distributed settings~\citep{wu2018error,alistarh2018convergence,gorbunov2020linearly,qian2021error,tang2019doublesqueeze,danilova2022distributed,qian2023catalyst}. 
Another error feedback variant is \algname{EF21}  proposed by \citet{richtarik2021ef21} that ensures strong convergence under any contractive compression operator for non-convex, smooth problems.  
Recent variants, e.g. \algname{EF21-SGD2M}~\citep{fatkhullin2024momentum}  and \algname{EControl}~\citep{gao2023econtrol},  have been developed to obtain the lower iteration and communication complexities than \algname{EF21} for stochastic optimization.

\section{Preliminaries}\label{sec:preliminaries}

\subsection{Notations}

We use $[a,b]$ to denote the set 
$\{a,a+1,a+2,\ldots,b\}$ for integers $a,b$ such that $a \leq b$, and $\Exp{u}$ to represent the expectation of a random variable $u$.
For vectors $x,y\in\R^d$, $\inp{x}{y}$ denotes their inner product, and $\norm{x} \eqdef \sqrt{\inp{x}{x}}$ denotes the Euclidean norm of $x$.
Finally, for functions $f,g:\R^d\rightarrow\R$, we write $f(x)=\cO(g(x))$ if $f(x) \leq M \cdot g(x)$ for some $M>0$.

\subsection{Problem Formulation}
We focus  on solving the 
finite-sum optimization problem:
\begin{align}\label{eqn:problem}
    \underset{x\in\R^d}{\min}  \  \left\{ f(x) \eqdef \frac{1}{n}\sum_{i=1}^n f_i(x) \right\},
\end{align}
where $x\in\R^d$ is the vector of model parameters of dimension $d$, and $f_i:\R^d\rightarrow\R$ is either a loss function on client $i \in [1,n]$ (distributed setting) or data point $i$ (single-node setting). 
Moreover, we impose the following assumption on objective functions commonly used for analyzing the convergence of first-order optimization algorithms~\citep{nesterov2018lectures}. 

\begin{assumption}\label{assum:lowebound_f}
Consider Problem~\eqref{eqn:problem}.
\begin{enumerate}
    \item  Let  $f:\R^d\rightarrow\R$ be bounded from below by a  finite constant $f^{\inf}$, i.e. $f(x) \geq f^{\inf} > -\infty$ for all $x\in\R^d$, and be $L$-smooth, i.e. $\norm{\nabla f(x) - \nabla f(y)} \leq L \norm{x-y}$ for all $x,y\in\R^d$. 
    \item Let  $f_i:\R^d\rightarrow\R$ be $L_i$-smooth, i.e. $\norm{\nabla f_i(x) - \nabla f_i(y)} \leq L \norm{x-y}$ for all $x,y\in\R^d$. 
\end{enumerate}
\end{assumption}

\subsection{DP-SGD}\label{sec:not_def}

To solve Problem~\eqref{eqn:problem}, the most common approach that ensures the approximate \((\epsilon, \delta)\)-differential privacy~\citep{dwork2006calibrating} is via the \algname{DP-SGD} method~\citep{abadi2016deep}
\begin{eqnarray}\label{eqn:DP_biased_GD}
    x^{k+1} = x^k - \gamma \left( \frac{1}{B}\sum_{i\in\cB^k} \Psi(\nabla f_i(x^k)) + z^k \right),
\end{eqnarray}
where $\gamma>0$ is the step size, $\cB^k$ is a subset of $\{1,2,\ldots,n\}$ with cardinality $\vert \cB^k \vert = B$, $z^k \in \R^d$ is the DP noise, and $\Psi:\R^d\rightarrow\R^d$ is an operator with bounded norm, i.e.  $\norm{\Psi(g)} \leq \Phi$ for some $\Phi>0$ and  any $g\in\R^d$.
The method~\eqref{eqn:DP_biased_GD} achieves $(\epsilon,\delta)$-DP~\citep{abadi2016deep} if  $z^k$  is zero-mean Gaussian noise with variance 
\begin{align}\label{eqn:variance_dp_sgd}
\sigma_{\text{DP}}^2 \geq  \Phi^2 \cdot    \frac{cB^2}{n^2}\frac{{K \log(1/\delta)}}{\epsilon^2},    
\end{align}
where $c>0$ is a constant, and $K>0$ is the total number of iterations. 
To obtain reasonable DP guarantees, we usually choose $\epsilon \leq 10$ and $\delta \ll 1/n$, where $n$ is the number of data points \citep{ponomareva2023dp}. Notice that the DP Gaussian noise variance \eqref{eqn:variance_dp_sgd} is scaled with the \textit{sensitivity} $\Phi$.

The method \eqref{eqn:DP_biased_GD} has been often analyzed, e.g. by \citet{zhang2020private,zhang2022understanding, murata2023diff2}, under the bounded gradient norm assumption
\begin{equation} \label{eq:bounded_grad}
    \norm{\nabla f_i(x)} \leq \Phi \quad \text{ for all \ $i$ \ and \ } x \in \R^d.
\end{equation}
However, this assumption has several limitations. 
Firstly, the sensitivity $\Phi$ is typically infeasible to compute for many loss functions used in training machine learning models. Even when it can be estimated, the resulting upper bound is often overly pessimistic, leading to excessively large DP noise and thus significantly degrading the algorithmic convergence performance.
Secondly, this assumption restricts the class of models and loss functions~$f$, as it excludes simple quadratic functions over unbounded domains.
Thirdly, this assumption is \say{pathological} in the distributed setting because it restricts the heterogeneity between different clients and can result in vacuous bounds \citep{khaled2020tighter}.

To enforce bounded sensitivity without imposing the bounded gradient norm, \citet{abadi2016deep} use 
clipping with threshold $\tau>0$
\begin{equation} \label{eq:clipping}
    \Clip{\tau}{g} \eqdef \min \br{1,\frac{\tau}{\norm{g}}} g. 
\end{equation}  
Here, the sensitivity $\Phi$ is the clipping threshold $\tau$, as 
$\norm{\Psi(g)} = \norm{\Clip{\tau}{g}} \leq \tau = \Phi$. 
In fact, the method \eqref{eqn:DP_biased_GD} that uses clipping \eqref{eq:clipping} is generally referred to as \algname{DP-SGD} in the literature. 
However, it is challenging to analyze the convergence of \algname{DP-SGD} without additional restrictive assumptions such as the symmetric noise assumption \citep{chen2020understanding, qian2021understanding}. 
Even without DP noise, \algname{DP-SGD} fails to converge due to the clipping bias~\citep{koloskova2023revisiting}.
Furthermore, since smaller values of $\tau$ imply stronger privacy but larger bias, jointly optimizing convergence and privacy of \algname{DP-SGD} by carefully tuning $\tau$ and $\gamma$ in the DP setting is a difficult task \citep{kurakin2022toward, bu2024automatic}.

\paragraph{Smoothed normalization as an alternative to clipping.}
To eliminate the need to tune the threshold $\tau$ of clipping, smoothed normalization is an operator alternative to clipping~\citep{bu2024automatic,yang2022normalized}, which is defined by  
\begin{align}\label{eqn:normalize}\Normalize{\alpha}{g} \eqdef \frac{1}{\alpha + \norm{g}}~g,
\end{align}
for some $\alpha  \geq 0$   
and satisfies the following property.
\begin{lemma}\label{lemma:prop_norm_vr2}
For any $\alpha  \geq 0$, $\beta>0$, and $g\in\R^d$, 
\begin{align}
\norm{ \Normalize{\alpha}{g} } & \leq 1, \label{eq:norm_1} \\ 
\sqnorm{g -   \beta  \Normalize{\alpha}{g}} &= \left( 1 - \frac{\beta}{\alpha+\norm{g}}\right)^2 \sqnorm{g}. \label{eq:norm_2}
\end{align}
\end{lemma}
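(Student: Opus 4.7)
The plan is direct since both claims follow from a single algebraic manipulation of the defining formula $\Normalize{\alpha}{g} = g/(\alpha + \norm{g})$.

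For \eqref{eq:norm_1}, I would simply take norms on both sides of the definition. Pulling the positive scalar $1/(\alpha + \norm{g})$ out of the norm gives $\norm{\Normalize{\alpha}{g}} = \norm{g}/(\alpha + \norm{g})$, and since $\alpha \geq 0$, the denominator is at least $\norm{g}$, so the ratio is at most $1$. The only subtlety is the degenerate case $g = 0$ with $\alpha = 0$, which I would either exclude by convention $\Normalize{0}{0}\eqdef 0$ or simply note that the bound becomes vacuous.

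For \eqref{eq:norm_2}, the key observation is that $\Normalize{\alpha}{g}$ is a scalar multiple of $g$, so $g - \beta\,\Normalize{\alpha}{g}$ is also a scalar multiple of $g$ and the squared norm factors exactly. Concretely I would write
\begin{equation*}
g - \beta\,\Normalize{\alpha}{g} \;=\; g - \frac{\beta}{\alpha + \norm{g}}\,g \;=\; \left(1 - \frac{\beta}{\alpha+\norm{g}}\right) g,
\end{equation*}
and then square the norm to obtain the claimed identity. No inequalities are needed; the statement is an equality.

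The hardest part is really just being careful with the case $g=0$ (so that division is well-defined) and with the sign of $1 - \beta/(\alpha + \norm{g})$, but since the right-hand side squares this factor, its sign is immaterial. No ancillary lemmas or assumptions from Assumption~\ref{assum:lowebound_f} are required; the proof uses only the definition \eqref{eqn:normalize} and positive homogeneity of the Euclidean norm.
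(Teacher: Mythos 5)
Your proposal is correct and follows essentially the same route as the paper: part \eqref{eq:norm_1} by taking the Euclidean norm of the definition, and part \eqref{eq:norm_2} by a direct algebraic computation from \eqref{eqn:normalize} (you factor the scalar $1 - \beta/(\alpha+\norm{g})$ out of $g - \beta\,\Normalize{\alpha}{g}$ before taking norms, whereas the paper expands the squared norm and then factors, which is the same one-line calculation). Your extra care about the degenerate case $g=0$, $\alpha=0$ is a reasonable touch the paper leaves implicit.
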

Clearly, smoothed normalization ensures Property \eqref{eq:norm_1} that the norm of the normalized vector is bounded above by $1$. 
Also, Property \eqref{eq:norm_2} states that the distance between the true vector and a $\beta$-multiple of the normalized vector is bounded by a function of $\beta$, $\alpha$, and $\norm{g}$. 
Although smoothed normalization with $\alpha=0$ recovers  standard normalization ${g}/{\norm{g}}$~\citep{nesterov1984minimization,hazan2015beyond,levy2016power}, smoothed normalization with $\alpha>0$  improves the contraction factor, compared to standard normalization.
Specifically, as $\norm{g}\rightarrow 0$, the contraction factor of smoothed normalization approaches $(1-\beta/\alpha)^2$, while standard normalization lacks this contraction property.

Although \algname{DP-SGD} \eqref{eqn:DP_biased_GD} with smoothed normalization achieves robust empirical convergence in the DP setting~\citep{bu2024automatic},
its theoretical convergence is limited to the single-node setting and relies on restrictive assumptions, specifically the central symmetry of stochastic gradients around the true gradient.

\subsection{Limitations of DP Distributed Gradient Methods} 
Extending the convergence results of \algname{DP-SGD} to the distributed setting poses significant challenges due to {potential client heterogeneity}. 
Existing results often address the bias introduced by the operator (clipping or normalization) by relying on restrictive assumptions, such as imposing bounded gradient norms~\citep{li2022soteriafl, zhang2022understanding, murata2023diff2, wang2024efficient}, or assuming that clipping is effectively turned off~\citep{zhang2024private,noble2022differentially}. {Recently, \citet{li2024an} extended the analysis of \citet{koloskova2023revisiting} to a distributed private setting under strong gradient dissimilarity condition. However, their method fails to converge due to the clipping bias, as discussed in the previous section.} 
More importantly, even in the absence of the DP noise ($z^k=0$), the inherent bias in the gradient estimator can severely impact the convergence. 
For instance, \algname{DP-SGD}~\eqref{eqn:DP_biased_GD} diverge exponentially when $\Psi(\cdot)$ is a Top-$1$ compressor~\citep{beznosikov2023biased}, and fail to converge when  $\Psi(\cdot)$ is  clipping~\citep{chen2020understanding, khirirat2023clip21}. Similarly, smoothed normalization \eqref{eqn:normalize} with $\alpha=0$ also cannot address this convergence issue, as demonstrated in the next example.

\begin{example} \label{ex:counter}
Consider Problem~\eqref{eqn:problem} with $n=2, d=1$, $f_1 (x) = \frac{1}{2}\br{x-3}^2$ and $f_2 (x) = \frac{1}{2}\br{x+3}^2$. Then $f(x) = \frac{1}{2}(f_1(x) + f_2(x))$ satisfies Assumption \ref{assum:lowebound_f} and is minimized at $x^\star = 0$. The iterates $\{x^k\}$ generated by \eqref{eqn:DP_biased_GD} (for $B=2$) with $z^k=0$ and $\alpha=0$ do not progress when $x^0 = 2$, as the gradient estimator $\Normalize{\alpha}{\nabla f_1(x^k)} + \Normalize{\alpha}{\nabla f_2(x^k)}$ results in
\begin{equation*}
    \frac{\nabla f_1(x^0)}{\|\nabla f_1(x^0)\|} + \frac{\nabla f_2(x^0)}{\|\nabla f_2(x^0)\|} = -1/1 + 5/5 = 0.
\end{equation*}
\end{example}
Naively applying normalization to the gradients in \algname{DP-SGD} leads to the method that does not converge in the distributed setting without further assumptions.
Also, this fundamental limitation affects federated averaging algorithms that apply normalization on the client updates~\citep{das2021convergence}.

{
\renewcommand{\arraystretch}{1.5} 
\begin{table*}[]
    \centering
    \color{black}
\resizebox{0.9\textwidth}{!}{
    \begin{tabular}{|c|c|}
     \hline
      \bf Operator   &  \bf Property 
      \\ \hline 
      \begin{tabular}{c}
        Contractive compressor 
        { $\cC:\R^d\rightarrow\R^d$} 
      \end{tabular}
      &  
     $\sqnorm{\cC(g)-g} \leq (1-\eta)^2 \sqnorm{g}$ 
     \\ 
      \begin{tabular}{c}
        Clipping 
        { $\Clip{\tau}{g} \eqdef \min\left( 1, \frac{\tau}{\norm{g}}\right) g$}
      \end{tabular}
    &   $\sqnorm{\Clip{\tau}{g}-g} \leq \max( 0 , \norm{g} - \tau)^2$   
    \\
      \begin{tabular}{c}
        Smoothed normalization 
        { $\Normalize{\alpha}{g}\eqdef \frac{1}{\alpha + \norm{g}} g$}
      \end{tabular}
      &   $\sqnorm{\Normalize{\alpha}{g}-g} \leq \left(1- \frac{1}{\alpha+\norm{g}} \right)^2 \sqnorm{g}$  
      \\ \hline
    \end{tabular}}
    \caption{Smoothed normalization, unlike clipping, obtains the contractive property similar to compressors. 
    }
    \label{tab:clipping_normalization}
\end{table*}
}

\subsection{EF21 Mechanism} 

One approach to resolve the convergence issues of distributed gradient methods with biased operators is to use \algname{EF21}, an error feedback mechanism developed by~\citet{richtarik2021ef21}.
Instead of directly applying the biased gradient estimator $\Psi$ to the gradient, \algname{EF21} applies $\Psi$ to the \emph{difference} between the true gradient and the current error feedback vector.
At each iteration $k=0,1,\ldots,K$, each client $i$ receives the current iterate $x^k$ from the central server, and computes its local update $g_i^{k+1}$ via
\begin{eqnarray}\label{eqn:g_i_k_dist_biased_gradient}
    g_i^{k+1} = g_i^k + \beta \Psi(\nabla f_i(x^k)- g_i^k),
\end{eqnarray}
where $\beta > 0$. Next, the central server receives the average of local error feedback vectors that are communicated by all clients $\frac{1}{n}\sum_{i=1}^n\Psi(\nabla f_i(x^k)-g_i^k)$, computes the global gradient estimator $g^k \eqdef \frac{1}{n}\sum_{i=1}^n g_i^k$ as
\begin{eqnarray}\label{eqn:g_k_dist_biased_gradient}
    g^{k+1} = g^k + \frac{\beta}{n}\sum_{i=1}^n\Psi(\nabla f_i(x^k)-g_i^k),
\end{eqnarray}
and updates the next iterate $x^{k+1}$ via
\begin{eqnarray}\label{eqn:x_dist_biased_gradient}
    x^{k+1} = x^k - \gamma g^{k+1}.
\end{eqnarray}
This method generalizes \algname{EF21}, which utilizes a contractive compressor ~\citep{stich2018sparsified, beznosikov2023biased}  defined by  
$$  \sqnorm{g - \cC(g)} \leq (1-\eta)^2 \sqnorm{g},
$$ for some $\eta\in (0,1]$ and any $g\in\R^d$. For instance, the method encompasses other estimators $\Psi(\cdot)$ such as clipping in \algname{Clip21} proposed by \citet{khirirat2023clip21}.

Despite achieving the $\cO(1/K)$ convergence rate in the non-private setting, \algname{Clip21} faces difficulty in establishing the convergence in the presence of DP noise for two primary reasons. 
Firstly, its convergence analysis relies on separate descent inequalities when clipping turns on and off, as the operator does not satisfy the contractive compressor property required by \algname{EF21} (see Table~\ref{tab:clipping_normalization}). 
Secondly, the clipping threshold $\tau$ intricately influences both privacy and convergence. 
A sufficiently large $\tau$ is required to achieve the descent inequality, but this condition requires adding large Gaussian noise, which can prevent the convergence when it is accumulated. 
Due to these properties of clipping, analyzing the convergence of \algname{Clip21} in the DP setting is challenging.

\section{$\alpha$-NormEC in the Non-private Setting}\label{sec:norm21}

To address the convergence challenges of \algname{Clip21}, we propose \alg, the first distributed method to provide provable convergence guarantees in the DP setting. 
\alg implements the update rules defined by~\eqref{eqn:g_i_k_dist_biased_gradient},~\eqref{eqn:g_k_dist_biased_gradient}, and~\eqref{eqn:x_dist_biased_gradient}, where $\Psi(\cdot)$ is smoothed normalization~\eqref{eqn:normalize} that offers  key advantages over clipping. In the update rule in~\eqref{eqn:x_dist_biased_gradient}, we rather use server normalization $x^{k+1}=x^k-\gamma g^{k+1}/\norm{g^{k+1}}$ and adopt notation $0/0=0$. 
See Algorithm~\ref{alg:norm_norm21_dp_v1} for the detailed description of \alg.

\begin{algorithm}
\caption{\algname{(DP-)}\alg}
\label{alg:norm_norm21_dp_v1}
\begin{algorithmic}[1]
\STATE \textbf{Input:} {Step size} $\gamma>0$; $\beta>0$; normalization parameter $\alpha>0$;  starting points $x^0,{g_i^{0}} \in \R^d$ for $i\in [1,n]$ and $\hat g^{0} = \frac{1}{n}\sum_{i=1}^n g_i^{0}$; $z_i^k \in \R^d$ {are sampled from Gaussian distribution with zero mean and $\sigma_{\rm DP}^2$-variance.}
\FOR{each iteration $k = 0, 1, \dots, K$}
    \FOR{each client $i = 1, 2, \dots, n$ in parallel}
        \STATE Compute local  gradient $\nabla f_i(x^k)$
        \STATE Compute $\Delta_i^k = \Normalize{\alpha}{\nabla f_i(x^k) - g_i^{k}}$
        \STATE Update  {$g_i^{k+1} = g_i^{k} + \beta \Delta_i^k$}
        \STATE \textbf{Non-private setting:}  Transmit $\hat \Delta_i^k = \Delta_i^k$
        \STATE \textbf{Private setting:}  Transmit $\hat \Delta_i^k = \Delta_i^k + z_i^k$
    \ENDFOR
    \STATE Server computes $\hat g^{k+1} = \hat g^{k} + \frac{\beta}{n}\sum_{i=1}^n \hat \Delta_i^k$
    \STATE Server updates {$x^{k+1} = x^k - \gamma {\hat g^{k+1}}/{\norm{\hat g^{k+1}}}$} 
\ENDFOR
\STATE \textbf{Output:} $x^{K+1}$
\end{algorithmic}
\end{algorithm}

\alg achieves better convergence guarantees than \algname{Clip21} in the non-private setting and   the first convergence guarantees in the DP setting.
These theoretical benefits of \alg stem from favorable properties of smoothed normalization. 
Specifically, smoothed normalization, unlike clipping, behaves similarly to a contractive compressor (see Table~\ref{tab:clipping_normalization}), which simplifies the convergence analysis of \alg compared to \algname{Clip21}.

The first theorem presents the convergence of \alg in the non-private setting.

\begin{theorem}[Non-private setting]\label{thm:norm_ef21_nondp}
Consider  \alg (Algorithm~\ref{alg:norm_norm21_dp_v1}) for solving Problem~\eqref{eqn:problem}, where Assumption~\ref{assum:lowebound_f} holds. 
Let $\beta,\alpha,\gamma>0$ be chosen such that 
\begin{align*}
 \frac{\beta}{\alpha + R} < 1, \quad \text{and} \quad  \gamma \leq \frac{\beta R}{\alpha + R} \frac{1}{L_{\max}},  
\end{align*}
where $R = \max_{i\in[1,n]}\norm{\nabla f_i(x^0)-g_i^{0}}$ and $L_{\max}=\max_{i\in[1,n]} L_i$.  Then,  
\begin{eqnarray*}
    \min_{k\in[0,K]} \norm{\nabla f(x^k)}
    \leq  \frac{f(x^0)-f^{\inf}}{\gamma(K+1)} + 2R + \frac{L}{2}\gamma.
\end{eqnarray*}
\end{theorem}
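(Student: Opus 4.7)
The plan is to establish a one-step descent by pairing $L$-smoothness of $f$ with the standard normalized-gradient inner-product bound, then control the residual bias $\norm{\nabla f(x^k) - \hat g^{k+1}}$ through an inductive invariant on the per-client error $G_i^k \eqdef \norm{\nabla f_i(x^k) - g_i^k}$, and finally telescope.

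Because the server update forces $\norm{x^{k+1}-x^k} = \gamma$, $L$-smoothness of $f$ yields
\begin{equation*}
f(x^{k+1}) \le f(x^k) - \gamma\inp{\nabla f(x^k)}{\hat g^{k+1}/\norm{\hat g^{k+1}}} + \tfrac{L}{2}\gamma^2.
\end{equation*}
Rewriting the inner product as $\norm{\hat g^{k+1}} + \inp{\nabla f(x^k) - \hat g^{k+1}}{\hat g^{k+1}/\norm{\hat g^{k+1}}}$ and applying Cauchy--Schwarz together with the reverse triangle inequality gives the standard lower bound $\ge \norm{\nabla f(x^k)} - 2\norm{\nabla f(x^k) - \hat g^{k+1}}$, producing the one-step inequality
\begin{equation*}
f(x^{k+1}) \le f(x^k) - \gamma\norm{\nabla f(x^k)} + 2\gamma\norm{\nabla f(x^k) - \hat g^{k+1}} + \tfrac{L}{2}\gamma^2.
\end{equation*}

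For the bias, in the non-private regime $\hat g^{k+1} = \tfrac{1}{n}\sum_i g_i^{k+1}$, so Jensen's inequality and property~\eqref{eq:norm_2} of Lemma~\ref{lemma:prop_norm_vr2} applied to $g = \nabla f_i(x^k) - g_i^k$ give
\begin{equation*}
\norm{\nabla f(x^k) - \hat g^{k+1}} \le \frac{1}{n}\sum_{i=1}^n \left|1 - \frac{\beta}{\alpha + G_i^k}\right| G_i^k.
\end{equation*}
Hence it suffices to prove the uniform invariant $G_i^k \le R$ for every $i$ and $k$: then each summand is bounded by $R$ and the bias term contributes $2\gamma R$ to the descent. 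The base case $G_i^0 \le R$ is immediate from the definition of $R$. For the inductive step, $L_i$-smoothness of $f_i$ combined with $\norm{x^{k+1}-x^k} = \gamma$ yields
\begin{equation*}
G_i^{k+1} \le L_i\gamma + \left|1 - \frac{\beta}{\alpha + G_i^k}\right| G_i^k.
\end{equation*}

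The main obstacle is the scalar analysis of $h(G) \eqdef |1 - \beta/(\alpha+G)|\,G$ over $[0,R]$. Under $\beta/(\alpha+R) < 1$, one argues that $h$ on this interval does not exceed $R - \beta R/(\alpha+R)$: in the ``undershoot'' sub-interval $G \ge \max(0,\beta-\alpha)$ the map equals $G(\alpha+G-\beta)/(\alpha+G)$ with derivative $1 - \alpha\beta/(\alpha+G)^2$, so its maximum on the sub-interval is attained at the endpoint $G=R$ with value $R - \beta R/(\alpha+R)$; the residual ``overshoot'' sub-interval (if $\beta>\alpha$) must be handled separately by computing its interior critical point and comparing the resulting value to $h(R)$. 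Combined with the step size $\gamma \le \beta R/((\alpha+R) L_{\max})$, this closes the induction since $G_i^{k+1} \le \beta R/(\alpha+R) + (R - \beta R/(\alpha+R)) = R$. Substituting $\norm{\nabla f(x^k) - \hat g^{k+1}} \le R$ into the one-step inequality, summing over $k=0,\ldots,K$, using $f(x^{K+1})\ge f^{\inf}$, and bounding $\min_k \norm{\nabla f(x^k)}$ by the average delivers the advertised rate after division by $\gamma(K+1)$.
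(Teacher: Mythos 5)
Your proposal follows the paper's proof essentially step for step: the same smoothness-based descent inequality exploiting $\norm{x^{k+1}-x^k}=\gamma$, the same Cauchy--Schwarz/triangle manipulation producing the $2\gamma\norm{\nabla f(x^k)-\hat g^{k+1}}$ term, the same averaging over clients, and the same induction maintaining $G_i^k\le R$ via Lemma~\ref{lemma:prop_norm_vr2} and $L_i$-smoothness (this is exactly Lemma~\ref{lemma:norm21_approach2} and Steps 1--4 of the paper's argument).

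The one place where you go beyond the paper --- the scalar analysis of $h(G)\eqdef\vert 1-\beta/(\alpha+G)\vert\,G$ on $[0,R]$ --- is also where your argument has a genuine hole. The undershoot branch is handled correctly (monotone, maximum $R-\beta R/(\alpha+R)$ at $G=R$), but for the overshoot branch $G<\beta-\alpha$ (possible when $\beta>\alpha$) you only promise to ``compute the interior critical point and compare,'' and the conclusion you then assert, $h(G)\le R-\beta R/(\alpha+R)$ on all of $[0,R]$, is false in general under the theorem's hypotheses: the overshoot branch attains its maximum at $G^\star=\sqrt{\alpha\beta}-\alpha$ with value $h(G^\star)=(\sqrt{\beta}-\sqrt{\alpha})^2$, which for, say, $\alpha\ll R$ and $\beta=0.9R$ (admissible, since $\beta<\alpha+R$) is roughly $0.9R$, whereas $R-\beta R/(\alpha+R)\approx 0.1R$. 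With $\gamma$ at its maximal allowed value the inductive bound then only gives $G_i^{k+1}\le L_{\max}\gamma+h(G_i^k)\approx 1.8R$, so the invariant $G_i^{k+1}\le R$ does not follow. As written, the induction closes only when $\beta\le\alpha$ (the regime the paper's corollaries in fact impose via $\alpha>\beta$), or under a stricter step-size requirement $L_{\max}\gamma\le R-\sup_{G\in[0,R]}h(G)$. To be fair, the paper's own Lemma~\ref{lemma:norm21_approach2} asserts precisely this inductive implication with no scalar analysis at all, so you have surfaced, rather than created, the weak point; note also that the weaker bound $h(G)\le R$ --- which is all the bias term in Step~2 needs --- does hold on $[0,R]$ whenever $\beta<\alpha+R$, since $(\sqrt{\beta}-\sqrt{\alpha})^2<R$ there; the difficulty is solely that the induction must additionally absorb the $L_{\max}\gamma$ displacement term.
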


From Theorem~\ref{thm:norm_ef21_nondp}, in the non-private setting,  \alg converges sublinearly up to the additive constant of $2R+ \frac{L}{2}\gamma$.
This constant diminishes when we properly choose initialized memory vectors $g_i^{-1}$ and the step size $\gamma$, as shown in the following corollary.

\begin{corollary}[Non-private setting]\label{corr:norm_ef21_nondp}
Consider  \alg (Algorithm~\ref{alg:norm_norm21_dp_v1})  for solving Problem~\eqref{eqn:problem} under the same setting as  Theorem~\ref{thm:norm_ef21_nondp}.  If we choose 
$g_i^{0}\in\R^d$  such that $\max_{i\in[1,n]} \norm{\nabla f_i(x^0)-g_i^{0}}  = D(K+1)^{-1/2}$ with any $D>0$, $\gamma \leq \frac{\beta}{L_{\max}} \frac{D}{\alpha+D} \frac{1}{(K+1)^{1/2}}$, and $\alpha>\beta$, then
\begin{eqnarray*}
     \min_{k\in[0,K]} \norm{\nabla f(x^k)}   \leq  \frac{C}{(K+1)^{1/2}}, 
\end{eqnarray*}
where $C =  \frac{L_{\max} (\alpha + D)}{\beta D} (f(x^0)-f^{\inf}) + 2D  + \frac{L}{2} \frac{\beta D}{L_{\max}(\alpha+D)}$.
\end{corollary}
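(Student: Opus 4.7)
The plan is to invoke Theorem~\ref{thm:norm_ef21_nondp} directly with the specified $\gamma$, $\alpha$, and initialization; the argument then reduces to (i) verifying the hypotheses of the theorem, and (ii) substituting into its right-hand side and collecting the $(K+1)^{-1/2}$ factor. Since Theorem~\ref{thm:norm_ef21_nondp} already delivers the sublinear bound with the additive constant $2R + (L/2)\gamma$, the whole job is to show that both of these quantities decay at the desired rate under the stated choices.

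For step (i), the condition $\beta/(\alpha + R) < 1$ is immediate, since $\alpha > \beta$ and $R \geq 0$ force $\alpha + R > \beta$. For the step-size constraint, I would use that $R = D(K+1)^{-1/2} \leq D$ (because $K \geq 0$), which gives $\alpha + R \leq \alpha + D$ and therefore
$$\frac{\beta R}{(\alpha + R) L_{\max}} = \frac{\beta D (K+1)^{-1/2}}{(\alpha + D (K+1)^{-1/2}) L_{\max}} \geq \frac{\beta D}{(\alpha + D) L_{\max} (K+1)^{1/2}}.$$
Hence the prescribed $\gamma \leq \frac{\beta D}{L_{\max}(\alpha+D)(K+1)^{1/2}}$ satisfies the step-size constraint of Theorem~\ref{thm:norm_ef21_nondp}.

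For step (ii), I would substitute into
$$\min_{k \in [0,K]} \norm{\nabla f(x^k)} \leq \frac{f(x^0) - f^{\inf}}{\gamma (K+1)} + 2R + \frac{L}{2}\gamma.$$
With $\gamma = \frac{\beta D}{L_{\max}(\alpha+D)(K+1)^{1/2}}$ and $R = D(K+1)^{-1/2}$, the first term becomes $\frac{L_{\max}(\alpha+D)(f(x^0) - f^{\inf})}{\beta D (K+1)^{1/2}}$, the second equals $\frac{2D}{(K+1)^{1/2}}$, and the third equals $\frac{L \beta D}{2 L_{\max}(\alpha+D)(K+1)^{1/2}}$. Factoring out $(K+1)^{-1/2}$ returns exactly the constant $C$ in the statement.

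Nothing here is a genuine obstacle: the argument is purely algebraic once Theorem~\ref{thm:norm_ef21_nondp} is invoked. The only mild subtlety is choosing $\gamma$ to depend on $\alpha + D$ rather than on $\alpha + R$; this is what keeps the final constant $C$ independent of $K$, and its admissibility follows from the elementary bound $\alpha + R \leq \alpha + D$ used above.
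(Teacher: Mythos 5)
Your proposal is correct and follows essentially the same route as the paper: verify that $\alpha>\beta$ gives $\beta/(\alpha+R)<1$ and that $\alpha+R\leq\alpha+D$ makes the chosen $\gamma$ satisfy the step-size condition of Theorem~\ref{thm:norm_ef21_nondp}, then substitute $R=D(K+1)^{-1/2}$ and the (maximal) $\gamma$ into the theorem's bound and factor out $(K+1)^{-1/2}$. The only cosmetic difference is that you spell out the step-size verification that the paper merely asserts, and, like the paper, you implicitly evaluate the $1/\gamma$ term at the largest admissible $\gamma$.
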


From Corollary~\ref{corr:norm_ef21_nondp}, \alg enjoys the $\cO(1/\sqrt{K})$ convergence  in the gradient norm when we choose $g_i^{-1}$  such that $R  = \cO(1/\sqrt{K})$, and $\gamma = \cO(\beta/\sqrt{K})$. 
By further choosing $\alpha>1$, and 
\begin{eqnarray*}
    \beta = \frac{L_{\max}(\alpha+D)}{D}\sqrt{\frac{2(f(x^0)-f^{\inf})}{L}},
\end{eqnarray*}
which ensures $\frac{L_{\max} (\alpha + D)}{\beta D} (f(x^0)-f^{\inf}) = \frac{L}{2} \frac{\beta D}{L_{\max}(\alpha+D)}$, 
the associated convergence bound from Corollary~\ref{corr:norm_ef21_nondp} becomes 
\begin{eqnarray}\label{eqn:final_bound_norm21}
     \min_{k\in[0,K]} \norm{\nabla f(x^k)} 
     \leq  \frac{\sqrt{2L (f(x^0)-f^{\inf})} + 2D}{(K+1)^{1/2}}.
\end{eqnarray}
This bound comprises two terms. The first term $\sqrt{2L (f(x^0)-f^{\inf})}(K+1)^{-1/2}$ is the convergence bound obtained by classical gradient descent, while the second term $2D(K+1)^{-1/2}$ comes from the initialized memory vectors $g_i^{-1}$ for running the error-feedback mechanism.

\paragraph{Comparison between \alg and \algname{Clip21}.} In the non-private setting, \alg provides stronger convergence guarantees than \algname{Clip21}.
Firstly, the hyperparameters of \alg (\(\beta, \alpha, \gamma > 0\)), as defined in Theorem~\ref{thm:norm_ef21_nondp}, are easier to implement. The step size \(\gamma\) of \algname{Clip21} (Theorem 5.6 of \citet{khirirat2019convergence}) presents a practical challenge, as it depends on the inaccessible values of \(f(x^0) - f^{\inf}\). Furthermore, the convergence bound of \alg in \eqref{eqn:final_bound_norm21} exhibits a smaller convergence factor than that of \algname{Clip21}, as explained in Appendix~\ref{app:compare_norm21_clip21}. Specifically, by choosing $g_i^{0}\in\R^d$ such that $D$ is sufficiently small, the convergence bound of \alg in \eqref{eqn:final_bound_norm21} 
approaches that of classical gradient descent \citep{carmon2020lower}.

\paragraph{Proof outline.}
By the $L$-smoothness of the objective function $f$, and by the update for $x^{k+1}$ in \alg, we obtain
\begin{eqnarray*}
    V^{k+1} 
     \leq   V^k - \gamma \norm{\nabla f(x^k)}+ \frac{L\gamma^2}{2}  +  2\gamma W^k,
\end{eqnarray*}
where $V^k \eqdef f(x^k)-f^{\inf}$, and $W^k \eqdef \frac{1}{n}\sum_{i=1}^n \norm{\nabla f_i(x^k)-g_i^{k+1}}$. 
The key step to establish the convergence is to bound $\norm{\nabla f_i(x^k) - g_i^{k+1}}$. 
Using Lemma~\ref{lemma:norm21_approach2} and appropriate choices of the tuning parameters $\beta$, $\alpha$, and $\gamma$, we get
\begin{eqnarray*}
    \norm{\nabla f_i(x^k)-g_i^{k+1}} \leq \max_{i\in[1,n]}\norm{\nabla f_i(x^0)-g_i^{0}}, \quad \forall k \geq 0.
\end{eqnarray*}
Finally, substituting this bound into the previous descent inequality yields the convergence bound in $\min_{k \in [0, K]} \norm{\nabla f(x^k)}$.  
Deriving the bound on $\norm{\nabla f_i(x^k) - g_i^{k+1}}$ for \alg by induction is similar to but simpler than \algname{Clip21}. This simplified proof is possible because smoothed normalization possesses a contractive property similar to the contractive compressor used in \algname{EF21}.

\section{$\alpha$-NormEC in the DP Setting}\label{sec:dp_norm21}

Next, we extend \alg to the DP setting. Unlike \algname{Clip21}, \alg achieves the first provable convergence guarantees in the presence of DP noise, as the smoothed normalization parameter does not affect the DP noise variance.
\alg in the DP setting is almost identical to \alg in the non-private setting, except for the step of communicating $\hat \Delta_i^k$ of Algorithm~\ref{alg:norm_norm21_dp_v1}. 
In this step, rather than transmitting the normalized gradient $\hat{\Delta}_i^k = \Delta_i^k \eqdef \Normalize{\alpha}{\nabla f_i(x^k) - g_i^{k}}$ in the non-private setting, each client in the DP setting communicates the privatized normalized gradient $\hat{\Delta}_i^k = \Delta_i^k + z_i^k$, where $z_i^k$ is the Gaussian noise.

The next theorem presents the convergence of \alg in the DP setting.

\begin{theorem}[DP setting]\label{thm:dp_n_norm21}
Consider \algname{DP-}\alg (Algorithm~\ref{alg:norm_norm21_dp_v1})  for solving Problem~\eqref{eqn:problem}, where Assumption~\ref{assum:lowebound_f} holds. 
Let 
$\beta,\alpha,\gamma > 0$ be chosen such that 
\begin{eqnarray*}
    \frac{\beta}{\alpha + R } < 1, \quad \text{and} \quad \gamma \leq \frac{\beta R}{\alpha+R}\frac{1}{L_{\max}},
\end{eqnarray*}
where $R = \max_{i\in[1,n]}\norm{\nabla f_i(x^0) - g_i^{0}}$, and $L_{\max} = \max_{i\in[1,n]} L_i$. Then, 
\begin{eqnarray*}
    \min_{k\in[0,K]} \Exp{\norm{\nabla f(x^k)}} \leq  \frac{f(x^0)-f^{\inf}}{\gamma(K+1)} + 2R + \frac{L}{2}\gamma 
     + 2 \sqrt{\beta^2 (K+1)\sigma_{\rm DP}^2}.
\end{eqnarray*}
\end{theorem}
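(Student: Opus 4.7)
The plan is to adapt the non-private descent analysis of Theorem~\ref{thm:norm_ef21_nondp} while tracking the accumulated Gaussian noise that enters the server's gradient estimator. First, I would decompose $\hat g^{k+1} = g^{k+1} + \xi^{k+1}$, where $g^{k+1} \eqdef \frac{1}{n}\sum_{i=1}^n g_i^{k+1}$ is the noise-free aggregate of the client memory vectors and $\xi^{k+1} \eqdef \frac{\beta}{n}\sum_{j=0}^{k} \sum_{i=1}^n z_i^j$ is the cumulative DP noise, satisfying $\xi^0=0$ and the recursion $\xi^{k+1} = \xi^k + \frac{\beta}{n}\sum_i z_i^k$. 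This decomposition cleanly separates the error-feedback bias from the DP perturbation.

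Next, applying $L$-smoothness to the server-normalization update $x^{k+1} = x^k - \gamma\, \hat g^{k+1}/\norm{\hat g^{k+1}}$, together with the standard inner-product bound $\inp{a}{b/\norm{b}} \geq \norm{a} - 2\norm{a-b}$, yields
\begin{equation*}
f(x^{k+1}) \leq f(x^k) - \gamma \norm{\nabla f(x^k)} + 2\gamma \norm{\nabla f(x^k) - \hat g^{k+1}} + \tfrac{L}{2}\gamma^2.
\end{equation*}
Splitting $\norm{\nabla f(x^k) - \hat g^{k+1}} \leq \norm{\nabla f(x^k) - g^{k+1}} + \norm{\xi^{k+1}}$ isolates the deterministic memory error from the stochastic DP term.

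The decisive observation is that the client-side update $g_i^{k+1} = g_i^k + \beta\,\Normalize{\alpha}{\nabla f_i(x^k)-g_i^k}$ is \emph{noise-free}, while server normalization forces $\norm{x^{k+1}-x^k} \leq \gamma$ \emph{deterministically}, irrespective of the noise realization. Consequently, the pathwise induction from Theorem~\ref{thm:norm_ef21_nondp} (invoking Lemma~\ref{lemma:prop_norm_vr2} together with the stepsize condition $\gamma \leq \tfrac{\beta R}{\alpha+R}\cdot\tfrac{1}{L_{\max}}$) carries over verbatim, yielding $\max_{i}\norm{\nabla f_i(x^k) - g_i^{k+1}} \leq R$ for all $k\geq 0$, and hence $\norm{\nabla f(x^k)-g^{k+1}} \leq R$. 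Substituting into the descent inequality, telescoping $k=0,\dots,K$, taking expectations, and dividing by $\gamma(K+1)$ produces
\begin{equation*}
\min_{k\in[0,K]} \Exp{\norm{\nabla f(x^k)}} \leq \frac{f(x^0)-f^{\inf}}{\gamma(K+1)} + 2R + \tfrac{L}{2}\gamma + \frac{2}{K+1}\sum_{k=0}^K \Exp{\norm{\xi^{k+1}}}.
\end{equation*}

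The last step is to bound the noise term. Since $\xi^{k+1}$ is a $\beta/n$-rescaled sum of $(k+1)n$ independent zero-mean Gaussians, its second moment grows linearly in $k+1$, and applying Jensen's inequality to the Ces\`aro average over $k$ gives
\begin{equation*}
\frac{1}{K+1}\sum_{k=0}^K \Exp{\norm{\xi^{k+1}}} \;\leq\; \sqrt{\frac{1}{K+1}\sum_{k=0}^K \Exp{\sqnorm{\xi^{k+1}}}} \;\leq\; \sqrt{\beta^2 (K+1)\sigma_{\rm DP}^2},
\end{equation*}
which yields the additive term $2\sqrt{\beta^2(K+1)\sigma_{\rm DP}^2}$. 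The main obstacle I anticipate is justifying that the pathwise invariant $\norm{\nabla f_i(x^k) - g_i^{k+1}} \leq R$ genuinely survives once the trajectory $\{x^k\}$ is randomized by DP noise; this is resolved, not circumvented, by the two structural facts above—server normalization caps the per-step displacement at $\gamma$ independently of the noise, and the client-side memory recursion is deterministic given the trajectory—so the induction proceeds exactly as in the non-private case.
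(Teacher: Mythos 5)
Your proposal is correct and follows essentially the same route as the paper: decompose the server estimator into the noise-free aggregate plus accumulated Gaussian noise, observe that server normalization and the noise-free client recursion make the induction $\norm{\nabla f_i(x^k)-g_i^{k+1}}\leq R$ hold pathwise, and bound the noise term via independence and Jensen's inequality. The only (immaterial) difference is that the paper bounds $\Exp{\norm{\hat g^{k+1}-g^{k+1}}}\leq\sqrt{\beta^2(K+1)\sigma_{\rm DP}^2/n}$ uniformly in $k$, retaining a $1/n$ factor, whereas you bound the Ces\`aro average and drop the $1/n$; both suffice for the stated bound.
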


In the DP setting, from Theorem~\ref{thm:dp_n_norm21}, 
\alg achieves the sublinear convergence up to the additive term $2R+\frac{L}{2}\gamma+2\sqrt{\beta^2(K+1)\sigma_{\text{DP}}^2}$. Notice that \alg in the DP setting introduces one additional term that arises from the DP noise $\sigma_{\text{DP}}^2$.
This additive constant diminishes when we initialize memory vectors $g_i^{0}\in\R^d$ such that $R$ becomes small and properly choose parameters $\gamma,\beta>0$.

\paragraph{Utility guarantees. }
In the DP setting, unlike \algname{Clip21}~\citep{khirirat2023clip21}, \alg achieves  $(\epsilon,\delta)$-DP  and comes with convergence guarantees.
We show this by setting the standard deviation of the DP noise according to Theorem 1 of \citet{abadi2016deep}, i.e., $\sigma_{\rm DP}=\cO(\sqrt{(K+1)\log(1/\delta)}\epsilon^{-1})$, which yields the following utility bound.

\begin{corollary}[Utility guarantee in DP setting]\label{corr:dp_n_norm21}
Consider \algname{DP-}\alg (Algorithm~\ref{alg:norm_norm21_dp_v1})  for solving Problem~\eqref{eqn:problem} under the same setting as  Theorem~\ref{thm:dp_n_norm21}. If $\sigma_{\rm DP} = \cO(\sqrt{(K+1)\log(1/\delta)}\epsilon^{-1})$,  $\beta = \frac{\beta_0}{K+1}$ 
with $\beta_0 \leq  \Delta \sqrt[4]{n \epsilon^2/(d \log(1/\delta))}$, and $\alpha > \beta_0$, then Algorithm~\ref{alg:norm_norm21_dp_v1} satisfies $(\epsilon,\delta)$-DP and attains the bound:
\begin{eqnarray*}
  \min_{k\in[0,K]} \Exp{\norm{\nabla f(x^k)}} \leq    \cO\left(\Delta\sqrt[4]{\frac{d \log(1/\delta)}{n\epsilon^2}} \right)  + 2 R, 
\end{eqnarray*}
where $\Delta=\sqrt{{L_{\max}(\alpha +R)(f(x^0)-f^{\inf})}/{R}}$, and $R = {\max}_{i \in [1,n]}\norm{\nabla f_i(x^0)-g_i^{0}}$.
\end{corollary}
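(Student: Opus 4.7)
The plan is to invoke Theorem~\ref{thm:dp_n_norm21} with a carefully chosen step size and error-feedback parameter, and then verify $(\epsilon,\delta)$-differential privacy separately.

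First I would address the privacy side. Each communicated message $\hat{\Delta}_i^k = \Delta_i^k + z_i^k$ is the output of a Gaussian mechanism applied to $\Delta_i^k = \Normalize{\alpha}{\nabla f_i(x^k) - g_i^{k}}$, whose $L_2$ sensitivity is at most $1$ by property~\eqref{eq:norm_1} of Lemma~\ref{lemma:prop_norm_vr2}. Choosing the per-coordinate noise scale $\sigma_{\rm DP} = \cO(\sqrt{(K+1)\log(1/\delta)}/\epsilon)$ and invoking the moments accountant (Theorem~1 of \citet{abadi2016deep}) over the $K+1$ rounds yields $(\epsilon,\delta)$-DP for the full algorithm.

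For the utility bound, I would saturate the step-size constraint of Theorem~\ref{thm:dp_n_norm21} by setting $\gamma = \beta R/((\alpha+R)L_{\max})$ and then substitute $\beta = \beta_0/(K+1)$. This splits the right-hand side of Theorem~\ref{thm:dp_n_norm21} into four pieces: a descent term of order $\Delta^2/\beta_0$, using the identity $\Delta^2 = L_{\max}(\alpha+R)(f(x^0)-f^{\inf})/R$; the $2R$ error-feedback residual; an $\cO(\gamma)$ term that decays like $1/(K+1)$ and is absorbed into the $\cO(\cdot)$; and a DP-noise term that, after accounting for the server aggregation $\frac{\beta}{n}\sum_{i=1}^n z_i^k$ across $d$ coordinates, scales like $\beta_0\sqrt{d\log(1/\delta)/(n\epsilon^2)}$.

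The heart of the argument is then to balance the first and fourth pieces by choosing $\beta_0 \asymp \Delta\sqrt[4]{n\epsilon^2/(d\log(1/\delta))}$, which is precisely the upper bound stipulated in the corollary. Plugging this back produces the advertised $\cO(\Delta\sqrt[4]{d\log(1/\delta)/(n\epsilon^2)})$ contribution, while the auxiliary requirement $\alpha > \beta_0$ guarantees that the hypothesis $\beta/(\alpha+R) < 1$ of Theorem~\ref{thm:dp_n_norm21} is comfortably satisfied. The main obstacle I anticipate is the bookkeeping around the noise term: one must carefully track how the isotropic per-coordinate DP noise propagates through the averaging $\frac{\beta}{n}\sum_{i=1}^n z_i^k$ and accumulates over $K+1$ server updates, since it is this combination that produces the $d/n$ factor inside the fourth root and pins down the exact form of the constraint on $\beta_0$. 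All remaining steps reduce to a routine substitution-and-optimize calculation.
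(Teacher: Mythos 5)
Your proposal is correct and follows essentially the same route as the paper: invoke Theorem 1 of \citet{abadi2016deep} for privacy with the stated $\sigma_{\rm DP}$, saturate $\gamma = \frac{\beta R}{(\alpha+R)L_{\max}}$, substitute $\beta = \beta_0/(K+1)$ into Theorem~\ref{thm:dp_n_norm21}, and balance the resulting $\Delta^2/\beta_0$ descent term against the accumulated DP-noise term via $\beta_0 \asymp \Delta\sqrt[4]{n\epsilon^2/(d\log(1/\delta))}$, with $\alpha>\beta_0$ guaranteeing $\beta/(\alpha+R)<1$. Your explicit bookkeeping of the $d$ and $n$ factors arising from the averaged noise $\frac{\beta}{n}\sum_{i=1}^n z_i^k$ is, if anything, slightly more careful than the paper's write-up, which introduces the $\sqrt[4]{d}$ factor only at the final choice of $\beta_0$.
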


Unlike \algname{Clip21}, \alg provides the first utility bound in the DP distributed setting that accounts for the effect of smoothed normalization, a factor often neglected in the existing literature.
As $R$ is sufficiently small ($R\rightarrow 0$), \alg achieves the utility bound of  $\cO\left(\Delta\sqrt[4]{\frac{d \log(1/\delta)}{n\epsilon^2}} \right)$.
Our obtained utility bound applies for smooth problems without the bounded gradient norm assumption, the limitation present in prior works for analyzing \algname{DP-SGD} such as \citet{li2022soteriafl,pmlr-v216-wang23b,lowy2023private,zhang2020private}.

\section{Experiments}\label{sec:exp}

We evaluate the performance of \alg
to solve the non-convex optimization problem of deep neural network training in both non-private and private settings. 
We conducted experiments on the CIFAR-10 \citep{krizhevsky2009learning} dataset using the ResNet20 \citep{he2016deep} model for the image classification task. The compared methods are run for 300 communication rounds. The convergence plots present results for tuned step size $\gamma$.
Additional experimental details and results are provided in Appendix \ref{app:experiments}.

\subsection{Non-private Training}

\begin{figure}[h]
\begin{center}
\centerline{\includegraphics[width=0.9\linewidth]{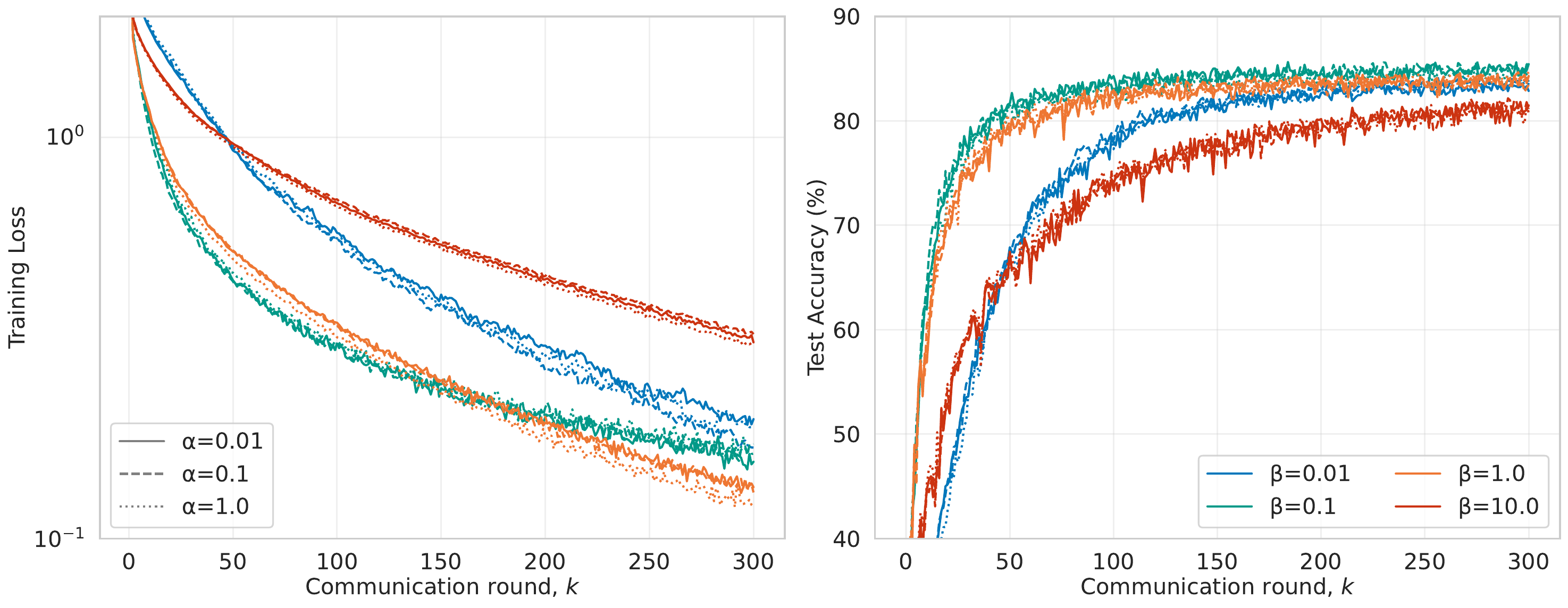}}
\caption{Training loss and test accuracy of non-private \alg with $\alpha=0.01$ [solid], $0.1$ [dashed], and $1.0$ [dotted], and $\beta=0.01$ [blue], $0.1$ [green], $1.0$ [orange], and $10.0$ [red].}
\label{fig:norm21_param_convergence}
\end{center}
\vspace{-35pt}
\end{figure}

\paragraph{\alg demonstrates stable convergence with respect to the normalization parameter $\alpha$, and robustness to variations in $\beta$ values.}
From Figure~\ref{fig:accuracy_heatmap} and~\ref{fig:norm21_param_convergence}, we observe that convergence of \alg is stable with respect to a wide range of $\alpha$ values and robust to variations in $\beta$.
The performance of \alg is primarily governed by the choice of $\beta$.
From Figure~\ref{fig:accuracy_heatmap},  optimal performance ($85$-$86\%$ accuracy) is observed when $\beta$ is around $0.1$.
While \alg is stable with respect to $\alpha$, extreme values of $\beta$ lead to suboptimal performance: very large values ($\beta=10.0$) result in lower accuracy ($81$-$82\%$), 
\begin{wrapfigure}{r}{0.5\textwidth}
\vspace{-5pt}
\centering
\includegraphics[width=\linewidth]{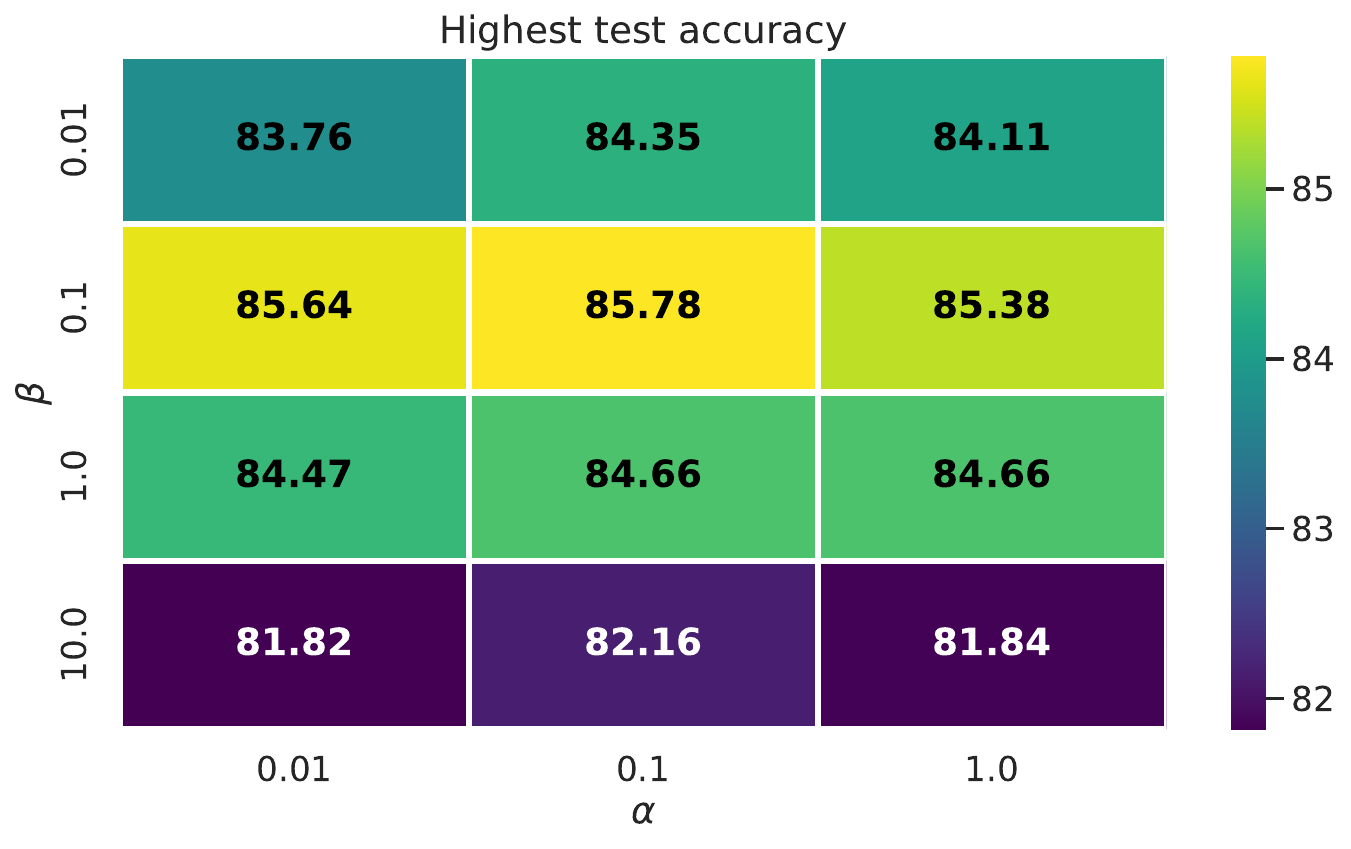}
\caption{The highest test accuracy by \alg with different $\alpha$ and $\beta$ values.}
\label{fig:accuracy_heatmap}
\vspace{-5pt}
\end{wrapfigure} 
while very small values ($\beta=0.01$) achieve moderate performance ($83$-$84\%$).
The optimal configuration, achieving the highest $85.78\%$ accuracy, is  $\beta=0.1$ and $\alpha=0.1$. 
Further analysis of the algorithm's stability with respect to $\alpha$ and robustness to $\beta$, including 
additional metrics is provided in Appendix \ref{app:exp_sensitivity}.
For subsequent experiments, we set $\alpha=0.01$, which is consistent with recommendations from prior work in the single-node setting \citep{bu2024automatic}.

\paragraph{Error compensation enables \alg to outperform \algname{DP-SGD}.}
From Figure \ref{fig:ef21_effect_loss}, \alg outperforms \algname{DP-SGD} with smoothed normalization (defined by Equation\eqref{eqn:DP_biased_GD} with $B = n$ and $z \equiv 0$).
This improvement is attributed to error compensation (EC), as confirmed by running \alg without server normalization (Line 11 of Algorithm~\ref{alg:norm_norm21_dp_v1}).
From Figure \ref{fig:ef21_effect_loss}, \alg achieves faster convergence and higher solution accuracy than \algname{DP-SGD} with smoothed normalization for most $\beta$ values, with the exception of $\beta=10$. 
However, such a large $\beta$ is impractical for differentially private training due to the resulting increase in noise variance. 
Moreover, while our algorithm demonstrates robust performance across varying $\beta$ values, \algname{DP-SGD} with smoothed normalization exhibits greater sensitivity to this parameter, notably struggling with convergence at $\beta = 0.01$. This comparison underscores how EC not only accelerates convergence but also improves the algorithm's stability with respect to parameter selection.
Further details and optimal parameters with corresponding final accuracies (Figure \ref{tab:norm_comparison}) are presented in Appendix \ref{app:exp_ef}.
\begin{figure}[h]
\begin{center}
\centerline{\includegraphics[width=0.9\linewidth]{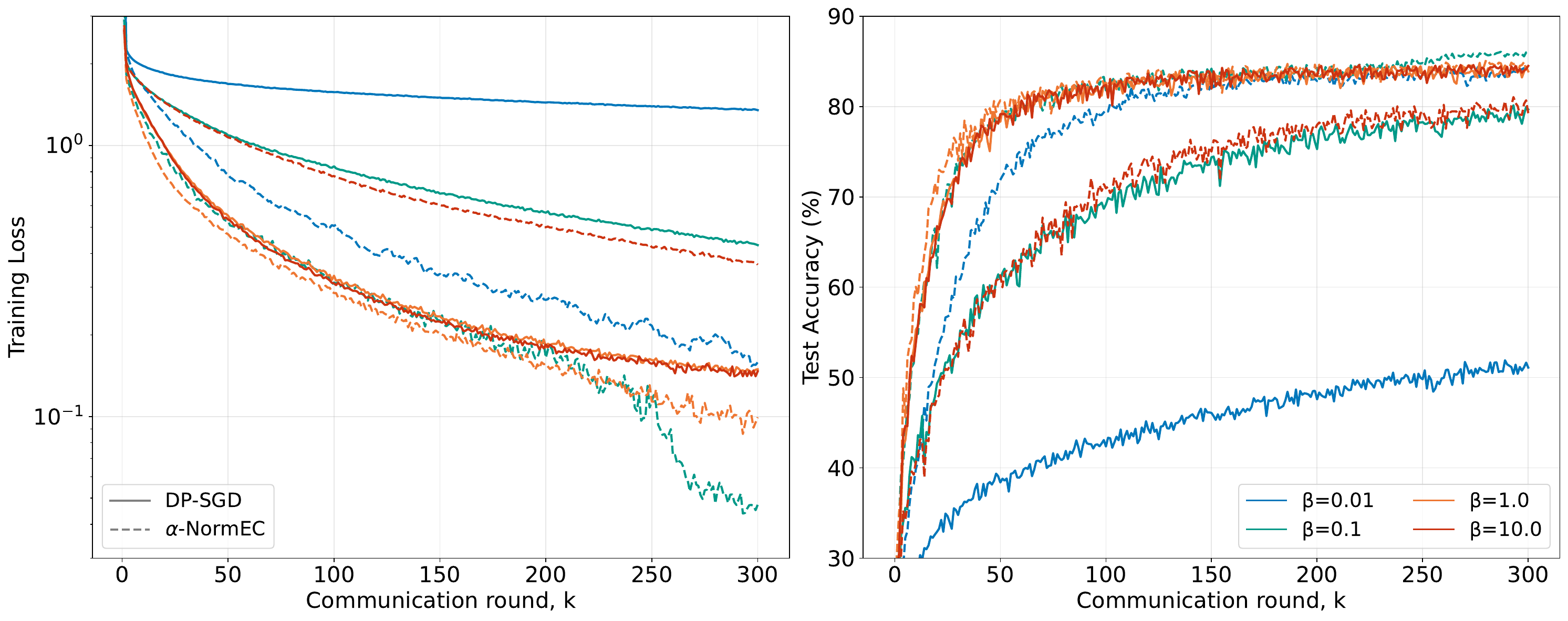}}
\caption{
Superior performance of \alg without server normalization  [dashed] over \algname{DP-SGD} \eqref{eqn:DP_biased_GD} with smoothed normalization [solid] in the non-private setting, in terms of  
training loss and test accuracy 
for different $\beta$ values (with fine-tuned step sizes).
}
\label{fig:ef21_effect_loss}
\end{center}
\vspace{-30pt}
\end{figure}

An ablation study examining the impact of server normalization on \alg is provided in Appendix \ref{app:exp_serv_norm}. Furthermore, a comparison between \alg and \algname{Clip21} is presented in Appendix \ref{app:exp_clip_norm}.

\subsection{Private Training}

We analyze the performance of \alg in the differentially private setting by training the model for 300 communication rounds. We set the noise variance at $\beta\sqrt{K \log(1/\delta)}\epsilon^{-1}$ for $\epsilon=8, \delta=10^{-5}$.

\begin{figure}[ht]
\begin{center}
\centerline{\includegraphics[width=\linewidth]{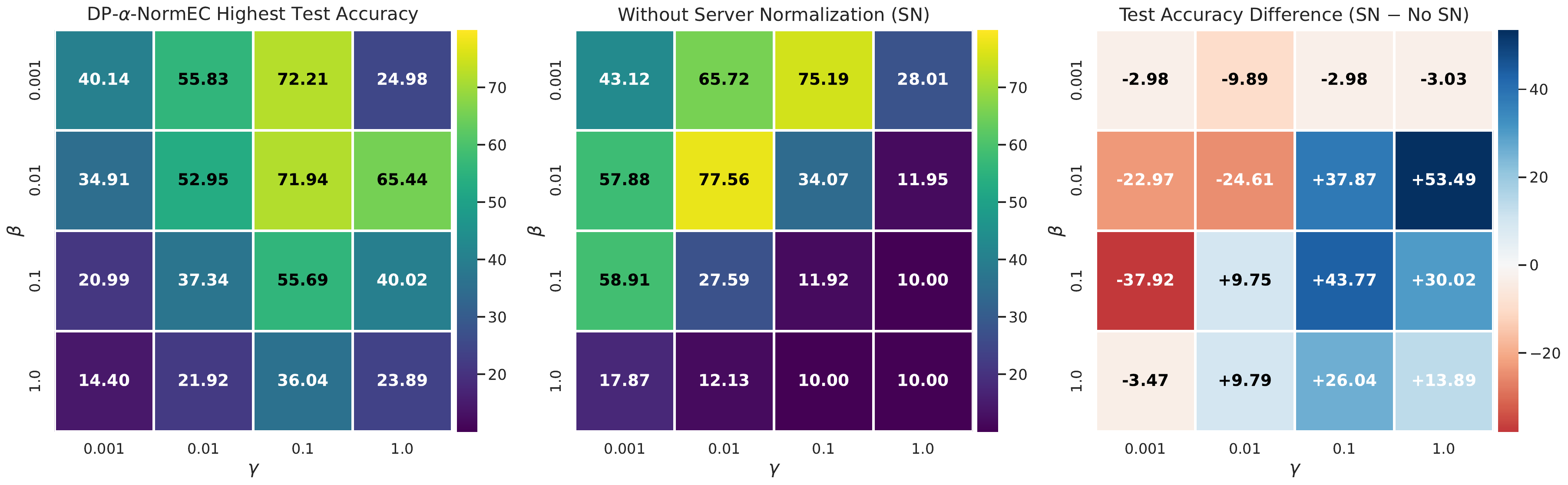}}
\caption{The highest test accuracy of \algname{DP}-\alg with [left] and without [center] Server Normalization (SN), and their difference [right].} 
\label{fig:dp_methods_accuracy}
\end{center}
\vspace{-35pt}
\end{figure}

\paragraph{Server normalization offers significantly improved convergence robustness.}
From Figure~\ref{fig:dp_methods_accuracy}, \alg with server normalization exhibits convergence robustness, albeit at the cost of slightly reduced peak performance, compared to \alg without server normalization. 
Specifically, at low $\beta$ values ($0.01$ and $0.001$), \alg with server normalization provides high robust performance in terms of the highest test accuracy.
For example, the performance variation across different $\gamma$ values ($0.001$, $0.01$, and $0.1$) is at most $6\%$ with server normalization, compared to a much larger $40\%$ variation without server normalization.
\begin{wrapfigure}{r}{0.4\textwidth}
\vspace{-18pt}
    \begin{center}
        \includegraphics[width=\linewidth]{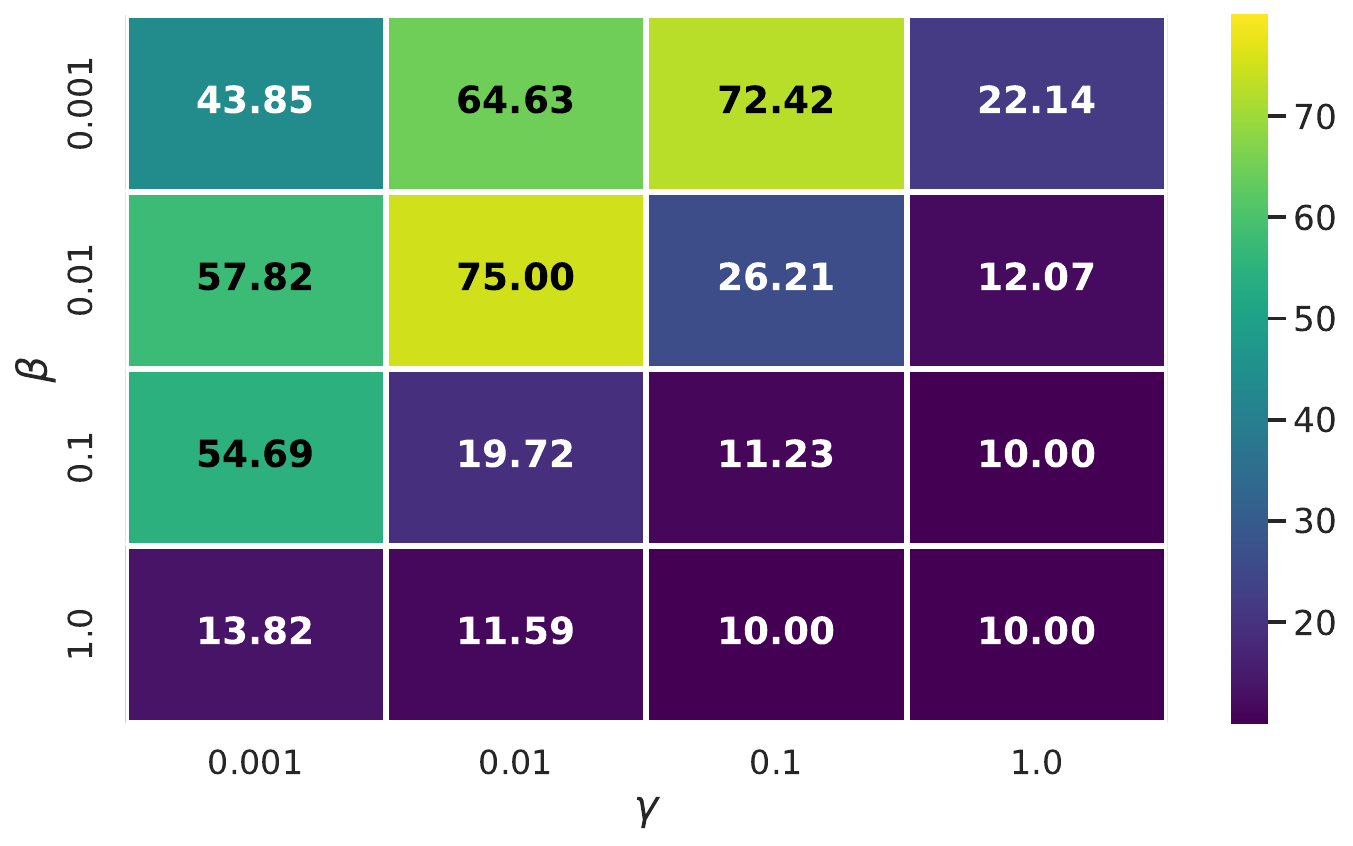}
    \end{center}
    \vspace{-15pt}
    \caption{The highest test accuracy of \algname{DP-Clip21}.}
    \label{fig:clip_accuracy}
    \vspace{-13pt}
\end{wrapfigure}
However, the highest test accuracy of $77.56\%$ is achieved by \alg without server normalization, outperforming \alg with server normalization, which reaches a peak accuracy of $72.21\%$.

\paragraph{\algname{DP-}\alg without server normalization outperforms \algname{DP-Clip21}.}
From Figure~\ref{fig:dp_methods}, under the same DP guarantee (the same $\beta$), \algname{DP-}\alg without server normalization converges significantly faster than \algname{DP-Clip21}\footnote{\color{black}\algname{DP-Clip21}, unlike \algname{Clip21}, does not have theoretical convergence guarantees.}, particularly when $\beta \in \{0.001,0.1\}$. 
Furthermore, comparing Figure~\ref{fig:dp_methods_accuracy}
 and~\ref{fig:clip_accuracy},  \algname{DP-}\alg provides higher robust convergence than \algname{DP-Clip21} across different hyperparameters $\beta, \gamma$. 
In particular, from Figure~\ref{fig:dp_methods}, while both algorithms diverge at $\beta=1.0$, only \algname{DP-}\alg with server normalization can still maintain convergence. 

\begin{figure}[H]
\begin{center}
\centerline{\includegraphics[width=0.9\linewidth]{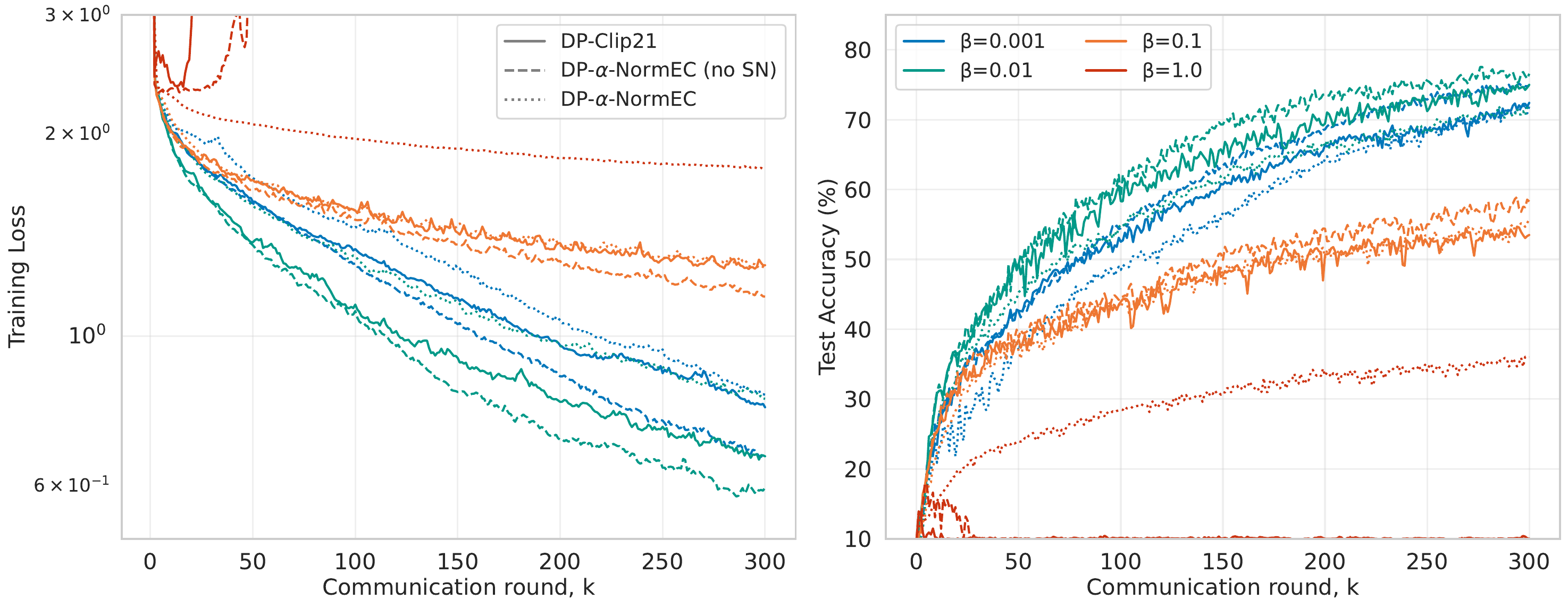}}
\caption{
Training loss and test accuracy of \algname{DP-Clip21} [solid], and \algname{DP-}\alg with [dotted] and without [dashed] server normalization (SN) across different $\beta$ values (with fine-tuned step sizes).}
\label{fig:dp_methods}
\end{center}
\end{figure}

\section{Conclusion}

We have proposed and analyzed \alg, a novel distributed algorithm that integrates smoothed normalization with the \algname{EF21} mechanism for solving non-convex, smooth optimization problems in both non-private and private settings. Unlike \algname{Clip21}, \alg achieves strong convergence guarantees that almost match those of classical gradient descent for non-private training and provides the first utility bound for private training without relying on restrictive assumptions such as bounded gradient norms.  
In neural network training, \alg achieves robust convergence across varying hyperparameters and significantly stronger convergence (due to error compensation) compared to \algname{DP-SGD} with smoothed normalization. In the private training, \algname{DP-}\alg benefits from server normalization for increased robustness and outperforms \algname{DP-Clip21}.

 Our work implies many promising research directions. 
 One direction is to extend \alg to accommodate the partial participation case, where the central server receives the local normalized gradients from a few clients, and the stochastic case, where each client has access only to stochastic gradients.  
 Another important direction is to modify \alg to solve federated learning problems, where the clients run their local updates before the local updates are normalized and transmitted to the central server.

\section*{Acknowledgements}
{\color{black}
We would like to thank Rustem Islamov for sharing his code.
} 

The research reported in this publication was supported by funding from King Abdullah University of Science and Technology (KAUST): i) KAUST Baseline Research Scheme, ii) Center of Excellence for Generative AI, under award number 5940, iii) SDAIA-KAUST Center of Excellence in Artificial Intelligence and Data Science.

\bibliographystyle{plainnat}
\bibliography{biblio}


\appendix

\tableofcontents

\section{Basic Facts}
For $n \in \mathbb{N}$ and $x_1,\ldots,x_n,x,y\in\R^d$, 
\begin{eqnarray}
    \inp{x}{y} & \leq & \norm{x}\norm{y}, \label{eqn:CS} \\
    \norm{x+y} & \leq & \norm{x} + \norm{y}, \quad \text{and}  \label{eqn:triangleIneq_1} \\ 
    \norm{\frac{1}{n}\sum_{i=1}^n x_i} & \leq & \frac{1}{n}\sum_{i=1}^n \norm{x_i}.\label{eqn:triangleIneq_2}
\end{eqnarray}

\section{Proof of Lemma~\ref{lemma:prop_norm_vr2}}
We prove the first statement by taking the Euclidean norm. Next, we prove the second statement. From the definition of the Euclidean norm, 
 \begin{eqnarray*}
     \sqnorm{g -   \beta\Normalize{\alpha}{g}} 
     & \overset{\eqref{eqn:normalize}}{=} & \sqnorm{g} + \frac{\beta^2}{(\alpha+\norm{g})^2}\sqnorm{g} - 2\beta \frac{\sqnorm{g}}{\alpha+\norm{g}} \\
     & = & \left( 1 - \frac{\beta}{\alpha + \norm{g}} \right)^2 \sqnorm{g}.
 \end{eqnarray*}

\section{Comparison of EF21 with Clipping and Smoothed Normalization}

We compare the modified \algname{EF21}  mechanism, where a contractive compressor is replaced with clipping in \algname{Clip21} and with smoothed normalization in \alg.
To compare these modified updates, given the optimal vector $g^\star\in\R^d$,  we consider the single-node  \algname{EF21} mechanism, which computes the memory vector $g^k\in\R^d$ according to
\begin{eqnarray}
    g^{k+1} = g^k + \Psi(g^\star - g^k),
\end{eqnarray}
where $\Psi:\R^d\rightarrow\R^d$ is the biased gradient estimator, and $g^0\in\R^d$ is the initial memory vector. 

If $\Psi(g) = \Clip{\tau}{g}$, then from Theorem 4.3 of \citet{khirirat2023clip21}  
\begin{eqnarray*}
    \norm{g^{k}-g^\star} \leq \max(0, \norm{g^0-g^\star}-k\tau).
\end{eqnarray*}

If $\Psi(g) = \Normalize{\alpha}{g}$, then from Lemma~\ref{lemma:prop_norm_vr2}
\begin{eqnarray*}
    \sqnorm{g^\star-g^k}
    & = & \sqnorm{g^\star-g^{k-1} -  \beta\Normalize{\alpha}{g^\star - g^{k-1}}}  \\
    & = & \left( 1- \frac{\beta}{\alpha + \norm{g^\star-g^{k-1}}} \right)^2 \sqnorm{g^\star-g^{k-1}}  \\
    & \vdots &   \\
    & = & \sqnorm{g^\star-g^{0}} \cdot \prod_{l=1}^k \left( 1- \frac{\beta}{\alpha + \norm{g^\star-g^{l-1}}} \right)^2.
\end{eqnarray*}

In conclusion, while the  \algname{EF21}  mechanism with clipping ensures that the memory $g^k$ will reach $g^\star$ within a finite number of iterations $k$ (when $k \geq \norm{g^0-g^\star}/\tau$),  the  \algname{EF21}  mechanism with smoothed normalization guarantees that $g^k$ will eventually reach $g^\star$ (provided that$\beta/\alpha < 1$).

\section{Proof of Theorem~\ref{thm:norm_ef21_nondp}}

We prove  Theorem~\ref{thm:norm_ef21_nondp} by  Lemma~\ref{lemma:norm21_approach2}, which states $\norm{\nabla f_i(x^{k+1})-g_i^{k+1}} \leq R$ for some positive scalars $R$, given that $\norm{\nabla f_i(x^{k})-g_i^{k}} \leq R$, and hyperparameters $\alpha,\beta,\gamma$ are properly tuned. 

\begin{lemma}[Non-private setting]\label{lemma:norm21_approach2}
Consider \alg (Algorithm~\ref{alg:norm_norm21_dp_v1})  for solving Problem~\eqref{eqn:problem}, where Assumption~\ref{assum:lowebound_f} holds.
If $\norm{\nabla f_i(x^k)-g_i^{k}} \leq R$, $\frac{\beta}{\alpha + R} < 1$, and $\gamma \leq \frac{\beta R}{\alpha + R} \frac{1}{L_{\max}}$ with $L_{\max}=\max_{i\in[1,n]} L_i$, then $  \norm{\nabla f_i(x^{k+1})-g_i^{k+1}}  \leq R$.
\end{lemma}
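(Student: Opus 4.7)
The plan is to prove the lemma by a triangle-inequality decomposition followed by a one-step contraction estimate from Lemma~\ref{lemma:prop_norm_vr2}. Write
$$
\norm{\nabla f_i(x^{k+1}) - g_i^{k+1}} \leq \norm{\nabla f_i(x^{k+1}) - \nabla f_i(x^k)} + \norm{\nabla f_i(x^k) - g_i^{k+1}},
$$
and bound the two pieces separately. The first term is handled by $L_i$-smoothness together with the key observation that, because of server normalization, $x^{k+1}-x^k = -\gamma \hat g^{k+1}/\norm{\hat g^{k+1}}$ has Euclidean norm at most $\gamma$ (using the convention $0/0 = 0$). This gives the drift bound $L_{\max}\gamma$.

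For the second term, substitute the memory update $g_i^{k+1} = g_i^{k} + \beta\,\Normalize{\alpha}{\nabla f_i(x^k)-g_i^{k}}$ and apply identity~\eqref{eq:norm_2} of Lemma~\ref{lemma:prop_norm_vr2} with $g = \nabla f_i(x^k) - g_i^{k}$. Setting $r \eqdef \norm{\nabla f_i(x^k)-g_i^{k}}\leq R$, this yields the clean formula
$$
\norm{\nabla f_i(x^k) - g_i^{k+1}} = \left\lvert 1 - \frac{\beta}{\alpha+r} \right\rvert \cdot r.
$$
Under the hypothesis $\beta/(\alpha+R) < 1$ we have $\beta < \alpha + R \leq \alpha + r$ when $r$ is close to $R$, so the factor is nonnegative in the regime of interest, and the residual is exactly $\psi(r) \eqdef (1 - \beta/(\alpha+r))\,r$.

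Combining, it suffices to show $\psi(r) + L_{\max}\gamma \leq R$ for all $r \in [0,R]$. The structure of $\psi$ suggests showing it is monotonically increasing on $[0,R]$ via $\psi'(r) = 1 - \beta\alpha/(\alpha+r)^2 \geq 0$, which then gives $\psi(r) \leq \psi(R) = R(1 - \beta/(\alpha+R)) = R - \beta R/(\alpha+R)$. The step-size condition $\gamma \leq \beta R/[(\alpha+R)L_{\max}]$ is precisely what is needed to absorb the drift term, yielding
$$
\norm{\nabla f_i(x^{k+1}) - g_i^{k+1}} \leq L_{\max}\gamma + R - \frac{\beta R}{\alpha+R} \leq R,
$$
which closes the induction.

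The main technical subtlety — and the step I expect to require the most care — is the monotonicity of $\psi$ on the full interval $[0,R]$. The inequality $(\alpha+r)^2 \geq \beta\alpha$ holds uniformly for $r \geq 0$ as soon as $\alpha \geq \beta$, a regime that matches the choice $\alpha > \beta$ made in Corollary~\ref{corr:norm_ef21_nondp}. If one wishes to work only under the weaker condition $\beta < \alpha + R$, one must inspect the branch where $\beta > \alpha + r$ (so the absolute value flips sign) and verify that the maximum of $\psi$ on $[0,R]$ is still achieved at $r = R$; this is a short calculus exercise but is the only nontrivial step beyond a direct application of Lemma~\ref{lemma:prop_norm_vr2} and $L$-smoothness.
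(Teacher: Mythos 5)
Your decomposition is exactly the paper's: triangle inequality, the drift bound $\norm{\nabla f_i(x^{k+1})-\nabla f_i(x^k)}\le L_{\max}\gamma$ from server normalization, the identity \eqref{eq:norm_2} giving the residual $\psi(r)\eqdef\abs{1-\beta/(\alpha+r)}\,r$ with $r=\norm{\nabla f_i(x^k)-g_i^k}$, and absorption of the drift via $\gamma\le \frac{\beta R}{\alpha+R}\frac{1}{L_{\max}}$. In the regime $\alpha\ge\beta$ your argument is complete and correct: $\psi'(r)=1-\beta\alpha/(\alpha+r)^2\ge 0$, so $\psi(r)\le\psi(R)=R-\frac{\beta R}{\alpha+R}$, and the induction closes. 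You are in fact more careful than the paper, whose written proof jumps from the bound $L_{\max}\gamma+\psi(r)$ to the conclusion, implicitly evaluating the contraction factor only at $r=R$.

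The one genuine flaw is your proposed fallback for the weaker hypothesis $\beta<\alpha+R$: it is \emph{not} true that the maximum of $\psi$ on $[0,R]$ is attained at $r=R$ once $\beta>\alpha$. On the flipped branch $\alpha+r<\beta$ one has $\psi(r)=\beta r/(\alpha+r)-r$, which peaks at $r=\sqrt{\alpha\beta}-\alpha$ with value $(\sqrt{\beta}-\sqrt{\alpha})^2$; e.g.\ for $\alpha=0.1$, $\beta=1$, $R=1$ (so $\beta/(\alpha+R)<1$) one gets $\psi\approx 0.47$ at $r\approx 0.22$, far above $\psi(R)\approx 0.09$, and then $L_{\max}\gamma+\psi(r)$ can exceed $R$ at the maximal admissible $\gamma$. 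So the "short calculus exercise" you defer to cannot be carried out under the lemma's stated hypotheses alone; closing the induction this way needs $\alpha\ge\beta$ (or an extra condition such as $(\sqrt{\beta}-\sqrt{\alpha})^2+\frac{\beta R}{\alpha+R}\le R$). This is a gap in the lemma as stated rather than in your strategy — the paper's own proof silently has the same issue, and the paper's later choices (e.g.\ $\alpha>\beta$ in Corollary~\ref{corr:norm_ef21_nondp}) put it back in the regime where your monotonicity argument applies.
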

\begin{proof}
From the definition of the Euclidean norm,  
\begin{eqnarray*}
    \norm{\nabla f_i(x^{k+1})-g_i^{k+1}} 
    & \overset{\eqref{eqn:triangleIneq_1}}{\leq} & \norm{\nabla f_i(x^{k+1})-\nabla f_i(x^k)} + \norm{\nabla f_i(x^k)-g_i^{k+1}} \\
    & \overset{g_i^{k+1}}{=} & \norm{\nabla f_i(x^{k+1})-\nabla f_i(x^k)} \\
    && + \norm{\nabla f_i(x^k)-g_i^{k}-\beta\Normalize{\alpha}{\nabla f_i(x^k)-g_i^{k}}} \\
    & \overset{\text{Lemma~\ref{lemma:prop_norm_vr2}}}{\leq} &  \norm{\nabla f_i(x^{k+1})-\nabla f_i(x^k)} \\
    &&+  \left\vert 1 - \frac{\beta}{\alpha + \norm{\nabla f_i(x^k)-g_i^{k}}} \right\vert \norm{\nabla f_i(x^k)-g_i^{k}} \\
    & \overset{{\text{Assumption~\ref{assum:lowebound_f}, and }x^{k+1}}}{\leq} &  L_{\max}\gamma +  \left\vert 1 - \frac{\beta}{\alpha + \norm{\nabla f_i(x^k)-g_i^{k}}} \right\vert \norm{\nabla f_i(x^k)-g_i^{k}}. 
\end{eqnarray*}

If $\norm{\nabla f_i(x^k)-g_i^{k}} \leq R$ and $\frac{\beta}{\alpha + R} < 1$, then 
$\norm{\nabla f_i(x^{k+1})-g_i^{k+1}}  \leq R$ when 
\begin{eqnarray*}
    \gamma \leq \frac{\beta R}{\alpha + R} \frac{1}{L_{\max}}. 
\end{eqnarray*}
\end{proof}

Now, we are ready to prove the result in Theorem~\ref{thm:norm_ef21_nondp} in four steps. 

\paragraph{Step 1) Prove by induction that $\norm{\nabla f_i(x^k)-g_i^{k}} \leq R$ for $R = \max_{i\in[1,n]}\norm{\nabla f_i(x^0)-g_i^{0}}$.} For $k=0$, this is obvious. Next, let $\norm{\nabla f_i(x^l)-g_i^{l}} \leq R$ for $R = \max_{i\in[1,n]}\norm{\nabla f_i(x^0)-g_i^{0}}$ for $l=0,1,\ldots,k$. Then, if $\beta/(\alpha + R) < 1$, and $\gamma \leq \frac{\beta R}{\alpha + R}\frac{1}{L_{\max}}$, then 
from Lemma~\ref{lemma:norm21_approach2} $\norm{\nabla f_i(x^{k+1})-g_i^{k+1}} \leq R$.

\paragraph{Step 2) Bound $\norm{\nabla f_i(x^k)-g_i^{k+1}}$.}

From the definition of the Euclidean norm, 
\begin{eqnarray*}
\norm{\nabla f_i(x^k)-g_i^{k+1}} 
& \overset{g_i^{k+1}}{=} & \norm{\nabla f_i(x^k) - g_i^{k} -  \beta\Normalize{\alpha}{\nabla f_i(x^k)-g_i^{k} }} \\
& \overset{\text{Lemma~\ref{lemma:prop_norm_vr2}}}{\leq} & \left\vert 1 - \frac{\beta}{\alpha + \norm{\nabla f_i(x^k)-g_i^{k}}} \right\vert \norm{\nabla f_i(x^k)-g_i^{k}} \\
& \overset{(*)}{\leq} & \left( 1 - \frac{\beta}{\alpha + R } \right) R \leq R,
\end{eqnarray*}
where we reach $(*)$ by the fact that $\norm{\nabla f_i(x^k)-g_i^{k}} \leq R$, $\frac{\beta}{\alpha +R} < 1$, and $\gamma \leq \frac{\beta R}{\alpha+R}\frac{1}{L_{\max}}$.

\paragraph{Step 3) Derive the descent inequality.}
By the $L$-smoothness of $f$, by the definition of $x^{k+1}$, and by the fact that $\hat g^{k+1} = g^{k+1}$,
\begin{eqnarray*}
     f(x^{k+1})-f^{\inf} 
    & \leq & f(x^k)-f^{\inf} - \frac{\gamma}{\norm{ g^{k+1}}} \inp{\nabla f(x^k)}{ g^{k+1}} + \frac{L\gamma^2}{2} \\
    & = & f(x^k)-f^{\inf} - \gamma \norm{ g^{k+1}} + \frac{\gamma}{\norm{ g^{k+1}}} \inp{\nabla f(x^k)-g^{k+1}}{g^{k+1}} + \frac{L\gamma^2}{2} \\
    & \overset{\eqref{eqn:CS}}{\leq} & f(x^k)-f^{\inf} - \gamma \norm{g^{k+1}} + \gamma\norm{\nabla f(x^k)-g^{k+1}} + \frac{L\gamma^2}{2} \\
    & \overset{\eqref{eqn:triangleIneq_1}}{\leq} & f(x^k)-f^{\inf} - \gamma \norm{\nabla f(x^k)} + 2\gamma\norm{\nabla f(x^k)-g^{k+1}} + \frac{L\gamma^2}{2} \\
    & \overset{\eqref{eqn:triangleIneq_2}}{\leq} & f(x^k)-f^{\inf} - \gamma \norm{\nabla f(x^k)} + 2\gamma \frac{1}{n}\sum_{i=1}^n \norm{\nabla f_i(x^k)-g_i^{k+1}} + \frac{L\gamma^2}{2}.
\end{eqnarray*}
Next, since $\norm{\nabla f_i(x^k)-g_i^{k+1}} \leq R$ with $R = \max_{i\in[1,n]}\norm{\nabla f_i(x^0)-g_i^{0}}$, we have 
\begin{eqnarray*}
    f(x^{k+1})-f^{\inf}  \leq f(x^k)-f^{\inf} - \gamma \norm{\nabla f(x^k)} + 2\gamma \max_{i\in[1,n]} \norm{\nabla f_i(x^0)-g_i^{0}} + \frac{L\gamma^2}{2}. 
\end{eqnarray*}

\paragraph{Step 4) Finalize the convergence rate.}
Finally, by re-arranging the terms of the  inequality,
\begin{align*}
    \min_{k\in[0,K]} \norm{\nabla f(x^k)}
    & \leq  \frac{1}{K+1} \sum_{k=0}^K \norm{\nabla f(x^k)} \\
    & \leq  \frac{[f(x^0)-f^{\inf}]-[f(x^{K+1})-f^{\inf}]}{\gamma(K+1)} + 2\max_{i\in[1,n]} \norm{\nabla f_i(x^0)-g_i^{0}} + \frac{L}{2}\gamma \\
    & \overset{(\dagger)}{\leq} \frac{f(x^0)-f^{\inf}}{\gamma(K+1)} + 2\max_{i\in[1,n]} \norm{\nabla f_i(x^0)-g_i^{0}} + \frac{L}{2}\gamma,
\end{align*}
where we reach $(\dagger)$ by the fact that $f^{\inf} \geq f(x^{K+1})$.

\section{Proof of Corollary~\ref{corr:norm_ef21_nondp}}
If $g_i^{0}\in\R^d$ is chosen such that $\max_{i\in[1,n]} \norm{\nabla f_i(x^0)-g_i^{0}}  = \frac{D}{(K+1)^{1/2}}$ with any $D>0$, $\gamma \leq \frac{\beta}{L_{\max}} \frac{D}{\alpha+D} \frac{1}{(K+1)^{1/2}}$, and $\beta < \alpha$, then from Theorem~\ref{thm:norm_ef21_nondp}, we obtain $\gamma \leq \frac{\beta R}{\alpha + R}\frac{1}{L_{\max}}$ with $R=\max_{i\in[1,n]}\norm{\nabla f_i(x^0)-g_i^{0}}$, and  
\begin{eqnarray*}
     \min_{k\in[0,K]} \norm{\nabla f(x^k)}  & \leq & \frac{L_{\max} (\alpha + D)}{\beta D} \frac{f(x^0)-f^{\inf}}{(K+1)^{1/2}} + 2 \frac{D}{(K+1)^{1/2}} + \frac{L}{2} \frac{\beta D}{L_{\max}(\alpha+D)} \frac{1}{(K+1)^{1/2}}.
\end{eqnarray*}

\section{\alg and \algname{Clip21} Comparison}\label{app:compare_norm21_clip21}

We show that the convergence bound of \alg~\eqref{eqn:final_bound_norm21} has a smaller factor than that of \algname{Clip21} from  Theorem 5.6. of \citet{khirirat2023clip21}.

To show this, let $\hat x^K$ be selected uniformly at random from a set $\{x^0,x^1,\ldots,x^K\}$. Then, from Theorem 5.6. of \citet{khirirat2023clip21}, \algname{Clip21} achieves the following convergence bound:
\begin{eqnarray*}
    \min_{k\in[0,K]} \norm{\nabla f(x^k)} & \leq& \Exp{\norm{\nabla f(\hat x^K)}}  \\
    & \leq & \sqrt{\Exp{\sqnorm{\nabla f(\hat x^K)}}} \\
    & \leq & \frac{L_{\max}(f(x^0)-f^{\inf})}{\tau (K+1)^{1/2}}  + \frac{\sqrt{(1+C_1/\tau)C_2}}{(K+1)^{1/2}},
\end{eqnarray*}
where $\tau>0$ is a clipping threshold, $C_1 = \max_{i\in[1,n]}\norm{\nabla f_i(x^0)}$, and $C_2 = \max(\max(L,L_{\max})(f(x^0)-f^{\inf})), C_1^2)$.

If $\tau = \frac{L_{\max}}{\sqrt{2L}}\sqrt{f(x^0)-f^{\inf}}$, then 
\begin{eqnarray*}
    \min_{k\in[0,K]} \norm{\nabla f(x^k)} 
    & \leq & \sqrt{\frac{2L (f(x^0)-f^{\inf})}{K+1}}  + \frac{\sqrt{ \left(1+ \frac{C_1\sqrt{2L}}{L_{\max}\sqrt{f(x^0)-f^{\inf}}} \right)C_2}}{(K+1)^{1/2}} \\
    & \leq & \sqrt{\frac{2L (f(x^0)-f^{\inf})}{K+1}} \\
    &&+ \frac{\sqrt{C_2} + \cO\left( {\max}(\sqrt{C_1}\sqrt[4]{f(x^0)-f^{\inf}}, C_1^3/\sqrt{f(x^0)-f^{\inf}}) \right) }{(K+1)^{1/2}}.
\end{eqnarray*}
The first term in the convergence bound of \algname{Clip21} matches that of \alg as given in \eqref{eqn:final_bound_norm21}. However, the second term in the convergence bound of \alg is ${D}/{\sqrt{K+1}}$, where \( D > 0 \) can be made arbitrarily small. In contrast, the corresponding term for \algname{Clip21} is \( C/\sqrt{K+1} \), where \( C > 0 \) may become significantly larger than $D$ if \( x^0 \in \mathbb{R}^d \) is far from the stationary point, leading to a large value of \( C_1 =\max_{i\in[1,n]}\norm{\nabla f_i(x^0)} \).

\section{Proof of Theorem~\ref{thm:dp_n_norm21}}

We prove Theorem~\ref{thm:dp_n_norm21} by  two useful lemmas: 
\begin{enumerate}
    \item Lemma~\ref{lemma:norm21_approach2}, which states  $\norm{\nabla f_i(x^{k+1})-g_i^{k+1}} \leq R$  for some positive scalars $R$, given that $\norm{\nabla f_i(x^{k})-g_i^{k}} \leq R$ and the hyperparameters $\gamma,\beta,\alpha$ are properly tuned, and 
    \item Lemma~\ref{lemma:dp_norm21_weight_tricks}, which bounds the difference in expectation between the memory vectors maintained by the central server and clients.   
\end{enumerate}

\begin{lemma}[DP setting]\label{lemma:dp_norm21_weight_tricks}
Consider \algname{DP-}\alg (Algorithm~\ref{alg:norm_norm21_dp_v1})  for solving Problem~\eqref{eqn:problem}, where Assumption~\ref{assum:lowebound_f} holds.
If $\hat g^0 = \frac{1}{n}\sum_{i=1}^n g_i^0$, then 
\begin{eqnarray*}
\Exp{\norm{\hat g^{k+1} - \frac{1}{n}\sum_{i=1}^n g^{k+1}_i}}  \leq \sqrt{{\color{black}\frac{{\beta^2(K+1) \sigma_{\rm DP}^2}}{n}} }.
\end{eqnarray*}
\end{lemma}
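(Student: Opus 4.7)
The plan is to unroll both recursions, observe that the accumulated discrepancy is precisely a zero-mean sum of independent Gaussian noise terms, then bound its expected norm via variance and Jensen's inequality.

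First, I would unroll the server's memory. By iterating the update $\hat g^{k+1} = \hat g^k + \frac{\beta}{n}\sum_{i=1}^n \hat\Delta_i^k$ with $\hat\Delta_i^k = \Delta_i^k + z_i^k$, together with the initialization $\hat g^0 = \frac{1}{n}\sum_{i=1}^n g_i^0$, I get
\begin{equation*}
\hat g^{k+1} = \frac{1}{n}\sum_{i=1}^n g_i^0 + \frac{\beta}{n}\sum_{t=0}^k \sum_{i=1}^n \Delta_i^t + \frac{\beta}{n}\sum_{t=0}^k \sum_{i=1}^n z_i^t.
\end{equation*}
In parallel, unrolling the clients' updates $g_i^{k+1} = g_i^k + \beta \Delta_i^k$ and averaging yields
\begin{equation*}
\frac{1}{n}\sum_{i=1}^n g_i^{k+1} = \frac{1}{n}\sum_{i=1}^n g_i^0 + \frac{\beta}{n}\sum_{t=0}^k \sum_{i=1}^n \Delta_i^t.
\end{equation*}
Subtracting, the deterministic contributions cancel exactly and I am left with $\hat g^{k+1} - \frac{1}{n}\sum_{i=1}^n g_i^{k+1} = \frac{\beta}{n}\sum_{t=0}^k \sum_{i=1}^n z_i^t$.

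Next, I would compute the expected squared norm using independence of the noise terms $z_i^t$ across both $t$ and $i$ (each with zero mean and $\Exp{\sqnorm{z_i^t}} = \sigma_{\rm DP}^2$). Cross terms vanish, giving
\begin{equation*}
\Exp{\sqnorm{\hat g^{k+1} - \tfrac{1}{n}\sum_{i=1}^n g_i^{k+1}}} = \frac{\beta^2}{n^2}\sum_{t=0}^k \sum_{i=1}^n \Exp{\sqnorm{z_i^t}} = \frac{\beta^2 (k+1)\sigma_{\rm DP}^2}{n} \leq \frac{\beta^2 (K+1)\sigma_{\rm DP}^2}{n},
\end{equation*}
since $k \leq K$. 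Finally, Jensen's inequality applied to the concave function $\sqrt{\cdot}$ gives $\Exp{\norm{\cdot}} \leq \sqrt{\Exp{\sqnorm{\cdot}}}$, yielding the stated bound.

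No step presents a real obstacle: the only thing to be careful about is the telescoping of the two recursions so that the $\Delta_i^t$ terms cancel exactly, which requires the hypothesis $\hat g^0 = \frac{1}{n}\sum_{i=1}^n g_i^0$; without this initialization the residual would contain a non-random gap and the clean Gaussian bound would not apply.
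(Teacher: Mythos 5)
Your proposal is correct and follows essentially the same route as the paper: the paper works with the error recursion $e^{k+1}=e^k+\beta z^k$ for $e^k=\hat g^k-\frac{1}{n}\sum_{i=1}^n g_i^k$ rather than unrolling both memories separately, but the substance is identical --- the discrepancy is the accumulated average of independent zero-mean Gaussian noise, whose expected norm is bounded via Jensen's inequality and vanishing cross terms, giving $\sqrt{\beta^2(k+1)\sigma_{\rm DP}^2/n}\leq\sqrt{\beta^2(K+1)\sigma_{\rm DP}^2/n}$.
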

\begin{proof}
From the definition of $g^k_i$ and $\hat g^k$, 
\begin{eqnarray*}
    e^{k+1} = e^{k} + \beta z^{k+1},
\end{eqnarray*}
where $e^k = \hat g^k - \frac{1}{n}\sum_{i=1}^n g^k_i$, and $z^k = \frac{1}{n}\sum_{i=1}^n z_i^k$. 
By applying the equation recursively, 
\begin{eqnarray*}
    e^{k+1} = e^{0} + \beta \sum_{l=1}^{k+1} z^l.
\end{eqnarray*}
Therefore, by the triangle inequality, 
\begin{eqnarray*}
    \norm{e^{k+1}} \leq \norm{e^{0}} +  \norm{\beta \sum_{l=1}^{k+1} z^l}.
\end{eqnarray*}

If $\hat g^{0}= \frac{1}{n}\sum_{i=1}^n g_i^{0}$, then $e^0=0$ and therefore
\begin{eqnarray*}
    \norm{e^{k+1}} \leq   \norm{\beta \sum_{l=1}^{k+1} z^l}.
\end{eqnarray*}
By taking the expectation,
\begin{eqnarray*}
    \Exp{\norm{e^{k+1}}} &\leq& \Exp{\norm{\beta \sum_{l=1}^{k+1} z^l}} \\
    & = & \Exp{\sqrt{ \sqnorm{\beta \sum_{l=1}^{k+1} z^l}  }} \\
    & {\leq} & \sqrt{\Exp{ \sqnorm{\beta \sum_{l=1}^{k+1} z^l}  }},
\end{eqnarray*}
where we reach the last inequality by Jensen's inequality. Next, by expanding the terms, 
\begin{eqnarray*}
    \Exp{\norm{e^{k+1}}} 
    & \leq & \sqrt{ \beta^2 \sum_{l=1}^{k+1} \Exp{\sqnorm{z^l}} + \beta^2 \sum_{j\neq i} \Exp{\inp{z^i}{z^j}}  } \\
    & \overset{(*)}{=} &\sqrt{ \beta^2 \sum_{l=1}^{k+1} \Exp{\sqnorm{z^l}}} \\
    & \overset{(\ddagger)}{\leq} & \sqrt{\frac{\beta^2}{n} \sum_{l=1}^{k+1}\sigma_{\rm DP}^2 },
\end{eqnarray*}
where we reach $(*)$ by the fact that  $\Exp{\langle z^j , z^i\rangle}=0$ for $i \neq j$, and $(\ddagger)$ by the fact that $\Exp{\sqnorm{z^k}} = {\sigma_{\rm DP}^2}/{n}$ (as $z_i^k$ is independent of $z_j^k$ for $i\neq j$).
Therefore, 
\begin{eqnarray*}
\Exp{\norm{e^{k+1}}} 
& \leq & \sqrt{\frac{\beta^2 (k+1)\sigma_{\rm DP}^2}{n}} \\
& \overset{k \leq K}{\leq} & \sqrt{ \frac{\beta^2(K+1) \sigma_{\rm DP}^2}{n} }.
\end{eqnarray*}
\end{proof}

Now, we prove Theorem~\ref{thm:dp_n_norm21} in three steps.

\paragraph{Step 1) Prove by induction that $\norm{\nabla f_i(x^k)-g_i^{k}} \leq R$ for $R = \max_{i\in[1,n]}\norm{\nabla f_i(x^0)-g_i^{0}}$.} For $k=0$, this is obvious. Next, let $\norm{\nabla f_i(x^l)-g_i^{l}} \leq R$ for $R = \max_{i\in[1,n]}\norm{\nabla f_i(x^0)-g_i^{0}}$ for $l=0,1,\ldots,k$. Then, if $\beta/(\alpha + R) < 1$, and $\gamma \leq \frac{\beta R}{\alpha + R}\frac{1}{L_{\max}}$, then 
from Lemma~\ref{lemma:norm21_approach2} $\norm{\nabla f_i(x^{k+1})-g_i^{k+1}} \leq R$. 

\paragraph{Step 2) Bound $\norm{\nabla f_i(x^k)-g_i^{k+1}}$.}

From the definition of the Euclidean norm, 
\begin{eqnarray*}
\norm{\nabla f_i(x^k)-g_i^{k+1}} 
& \overset{g_i^{k+1}}{=} & \norm{\nabla f_i(x^k) - g_i^{k} -  \beta\Normalize{\alpha}{\nabla f_i(x^k)-g_i^{k} }} \\
& \overset{\text{Lemma~\ref{lemma:norm21_approach2}}}{\leq} & \left\vert 1 - \frac{\beta}{\alpha + \norm{\nabla f_i(x^k)-g_i^{k}}} \right\vert \norm{\nabla f_i(x^k)-g_i^{k}}.
\end{eqnarray*}

\paragraph{Step 3) Derive the descent inequality in $\Exp{f(x^k)-f^{\inf}}$.}
Denote $g^k = \frac{1}{n}\sum_{i=1}^n g_i^k$. By the $L$-smoothness of $f$, and by the definition of $x^{k+1}$, 
\begin{eqnarray*}
     f(x^{k+1})-f^{\inf} 
    & \leq & f(x^k)-f^{\inf} - \frac{\gamma}{\norm{\hat g^{k+1}}} \inp{\nabla f(x^k)}{\hat g^{k+1}} + \frac{L\gamma^2}{2} \\
    & = & f(x^k)-f^{\inf} - \gamma \norm{\hat g^{k+1}} + \frac{\gamma}{\norm{\hat g^{k+1}}} \inp{\nabla f(x^k)-\hat g^{k+1}}{\hat g^{k+1}} + \frac{L\gamma^2}{2} \\
    & \overset{\eqref{eqn:CS}}{\leq} & f(x^k)-f^{\inf} - \gamma \norm{\hat g^{k+1}} + \gamma\norm{\nabla f(x^k)-\hat g^{k+1}} + \frac{L\gamma^2}{2} \\
    & \overset{\eqref{eqn:triangleIneq_1}}{\leq} & f(x^k)-f^{\inf} - \gamma \norm{\nabla f(x^k)} + 2\gamma\norm{\nabla f(x^k)-\hat g^{k+1}} + \frac{L\gamma^2}{2} \\
    & \overset{\eqref{eqn:triangleIneq_2}}{\leq} & f(x^k)-f^{\inf} - \gamma \norm{\nabla f(x^k)} + 2\gamma \frac{1}{n}\sum_{i=1}^n \norm{\nabla f_i(x^k)-g_i^{k+1}} \\
    && + 2\gamma\norm{\hat g^{k+1}- g^{k+1}} + \frac{L\gamma^2}{2}. 
\end{eqnarray*}
Next, let $\cF^k$ be the history up to iteration $k$, i.e. $\cF^k \eqdef \{x^0,z_1^0,\ldots,z_n^0,\ldots,x^k,z_1^k,\ldots,z_n^k\}$. Then, 
\begin{eqnarray*}
    \ExpCond{f(x^{k+1})-f^{\inf}}{\cF^k} 
    & \leq & f(x^k)-f^{\inf} - \gamma \norm{\nabla f(x^k)} + 2\gamma \frac{1}{n}\sum_{i=1}^n  \ExpCond{\norm{\nabla f_i(x^k)-g_i^{k+1}}}{\cF^k} \\
   && + 2\gamma \ExpCond{\norm{\hat g^{k+1}- g^{k+1}}}{\cF^k} + \frac{L\gamma^2}{2}. 
\end{eqnarray*}
Next, by the upper-bound for $\norm{\nabla f_i(x^k)-g_i^{k+1}}$, 
\begin{eqnarray*}
    \ExpCond{\norm{\nabla f_i(x^k)-g_i^{k+1}}}{\cF^k} 
    & \leq &  \ExpCond{\left\vert 1 - \frac{\beta}{\alpha + \norm{\nabla f_i(x^k)-g_i^{k}}} \right\vert \norm{\nabla f_i(x^k)-g_i^{k}}}{\cF^k}  \\
    & = & \left\vert 1 - \frac{\beta}{\alpha + \norm{\nabla f_i(x^k)-g_i^{k}}} \right\vert \norm{\nabla f_i(x^k)-g_i^{k}} \\
    & \leq & \left(1- \frac{\beta}{\alpha+R} \right) R \leq R,
\end{eqnarray*}
where we reach the second inequality by the fact that $\norm{\nabla f_i(x^k)-g_i^{k}} \leq R$, $\frac{\beta}{\alpha +R} < 1$, and $\gamma \leq \frac{\beta R}{\alpha+R}\frac{1}{L_{\max}}$. Thus, 
\begin{eqnarray*}
    \ExpCond{f(x^{k+1})-f^{\inf}}{\cF^k} 
    & \leq & f(x^k)-f^{\inf} - \gamma \norm{\nabla f(x^k)} + 2\gamma R \\
   && + 2\gamma \ExpCond{\norm{\hat g^{k+1}- g^{k+1}}}{\cF^k} + \frac{L\gamma^2}{2}. 
\end{eqnarray*}
By taking the expectation, and by the tower property $\Exp{\ExpCond{X}{Y}}=\Exp{X}$,
\begin{eqnarray*}
    \Exp{f(x^{k+1})-f^{\inf}}
    & = & \Exp{ \ExpCond{f(x^{k+1})-f^{\inf}}{\cF^k} } \\
    & \leq & \Exp{f(x^k)-f^{\inf}} - \gamma \Exp{\norm{\nabla f(x^k)}} + 2\gamma R  \\
    && + 2\gamma \Exp{\norm{\hat g^{k+1}- g^{k+1}}} + \frac{L\gamma^2}{2}.
\end{eqnarray*}
Next, by using Lemma~\ref{lemma:dp_norm21_weight_tricks}, 
\begin{eqnarray*}
  \Exp{ f(x^{k+1})-f^{\inf} } & \leq &   \Exp{f(x^k)-f^{\inf}} - \gamma \Exp{\norm{\nabla f(x^k)}} + 2\gamma R \\
  && + 2\gamma {\color{black}\sqrt{ \frac{\beta^2(K+1) \sigma_{\rm DP}^2}{n} }}+ \frac{L\gamma^2}{2}.
\end{eqnarray*}
Therefore, 
\begin{eqnarray*}
   && \min_{k\in[0,K]} \Exp{\norm{\nabla f(x^k)}}
     \leq  \frac{1}{K+1} \sum_{k=0}^K \Exp{\norm{\nabla f(x^k)}} \\
    && \leq  \frac{\Exp{f(x^0)-f^{\inf}}-\Exp{f(x^{K+1})-f^{\inf}}}{\gamma(K+1)}  + 2 R + 2 {\color{black}\sqrt{ \frac{\beta^2(K+1) \sigma_{\rm DP}^2}{n} }}+ \frac{L}{2}\gamma \\
    && \leq  \frac{f(x^0)-f^{\inf}}{\gamma(K+1)}   + 2 R+ 2 {\color{black}\sqrt{ \frac{\beta^2(K+1) \sigma_{\rm DP}^2}{n} }} + \frac{L}{2}\gamma,
\end{eqnarray*}
where  we reach the last inequality by the fact that $f^{\inf} \geq f(x^{K+1})$.

\section{Proof of Corollary~\ref{corr:dp_n_norm21}}

Let $\sigma_{\rm DP} = \cO\left( \frac{\sqrt{(K+1)\log(1/\delta)}}{\epsilon} \right)$,  
and  $\beta = \frac{\beta_0}{K+1}$ with $0<\beta_0<\alpha + R$. Then, from Theorem~\ref{thm:dp_n_norm21}, we get$\gamma \leq \frac{\beta_0 R}{\alpha + R}\frac{1}{L_{\max}} \frac{1}{K+1}$ with $R = \max_{i\in[1,n]}\norm{\nabla f_i(x^0)-g_i^{0}}$, and 
\begin{eqnarray*}
  \min_{k\in[0,K]} \Exp{\norm{\nabla f(x^k)}} 
  &\leq & \frac{L_{\max}(\alpha +R)(f(x^0)-f^{\inf})}{\beta_0 R}  + 2 R + \cO\left(\frac{\beta_0  \sqrt{\log(1/\delta)}}{ {\color{black}\sqrt{n}} \epsilon} \right) \\
  && + \frac{L \beta_0 R}{2(\alpha + R)L_{\max}} \frac{1}{K+1}.
\end{eqnarray*}

If $\beta_0 \leq \sqrt{\frac{L_{\max}(\alpha +R)(f(x^0)-f^{\inf})}{R}} \frac{{\color{black}\sqrt[4]{n}}\sqrt{\epsilon}}{ \sqrt[4]{d} \sqrt[4]{\log(1/\delta)}}$ and $\alpha > \beta_0$, then
\begin{eqnarray*}
  \min_{k\in[0,K]} \Exp{\norm{\nabla f(x^k)}} &\leq&   2 R + \cO\left(\sqrt{\frac{L_{\max}(\alpha +R)(f(x^0)-f^{\inf})}{R}}\frac{ \sqrt[4]{d} \sqrt[4]{\log(1/\delta)}}{ {\color{black}\sqrt[4]{n}}\sqrt{\epsilon}} \right) \\
  &&+ \cO\left( \frac{1}{K+1} \right).
\end{eqnarray*}

\newpage

\section{Experimental Details and Additional Results} \label{app:experiments}

We include details on experimental setups and additional results in the non-private and private training for the ResNet20 model on the CIFAR-10 dataset.  

\subsection{Additional Experimental Details}
The dataset was split into train (90\%) and test (10\%) sets. The train samples were randomly shuffled and distributed across 10 workers. Every worker computed gradients with batch size 32. The training was performed for 300 communication rounds. The random seed was fixed to 42 for reproducibility.

All the methods were run with a constant step size (learning rate) without other techniques, such as schedulers, warm-up, or weight decay. 
They were evaluated across the following hyperparameter combinations: 
\begin{itemize}
    \item step size $\gamma$: $\{0.001, 0.01, 0.1, 1.0\}$,
    \item Sensitivity/clip threshold $\beta$: $\{0.01, 0.1, 1.0, 10.0\}$,
    \item Smoothed normalization value $\alpha$: $\{0.01, 0.1, 1.0\}$.
\end{itemize}

Our implementation is based on the public GitHub repository of \citet{idelbayev18a}. Experiments were performed on a machine with a single GPU: NVIDIA GeForce RTX 3090.

\subsection{Non-private Training}

\subsubsection{Sensitivity of \alg to Parameters $\beta, \alpha$} \label{app:exp_sensitivity}

\alg provides stable convergence with respect to $\alpha$ and robust convergence for different $\beta$ values, in terms of the minimal training loss, the final train loss, and the final test accuracy in Figure~\ref{fig:heatmap_non_private_other}.

\begin{figure}[h]
\centering
\begin{minipage}[c]{0.32\textwidth}
    \centering
    \includegraphics[width=\linewidth]{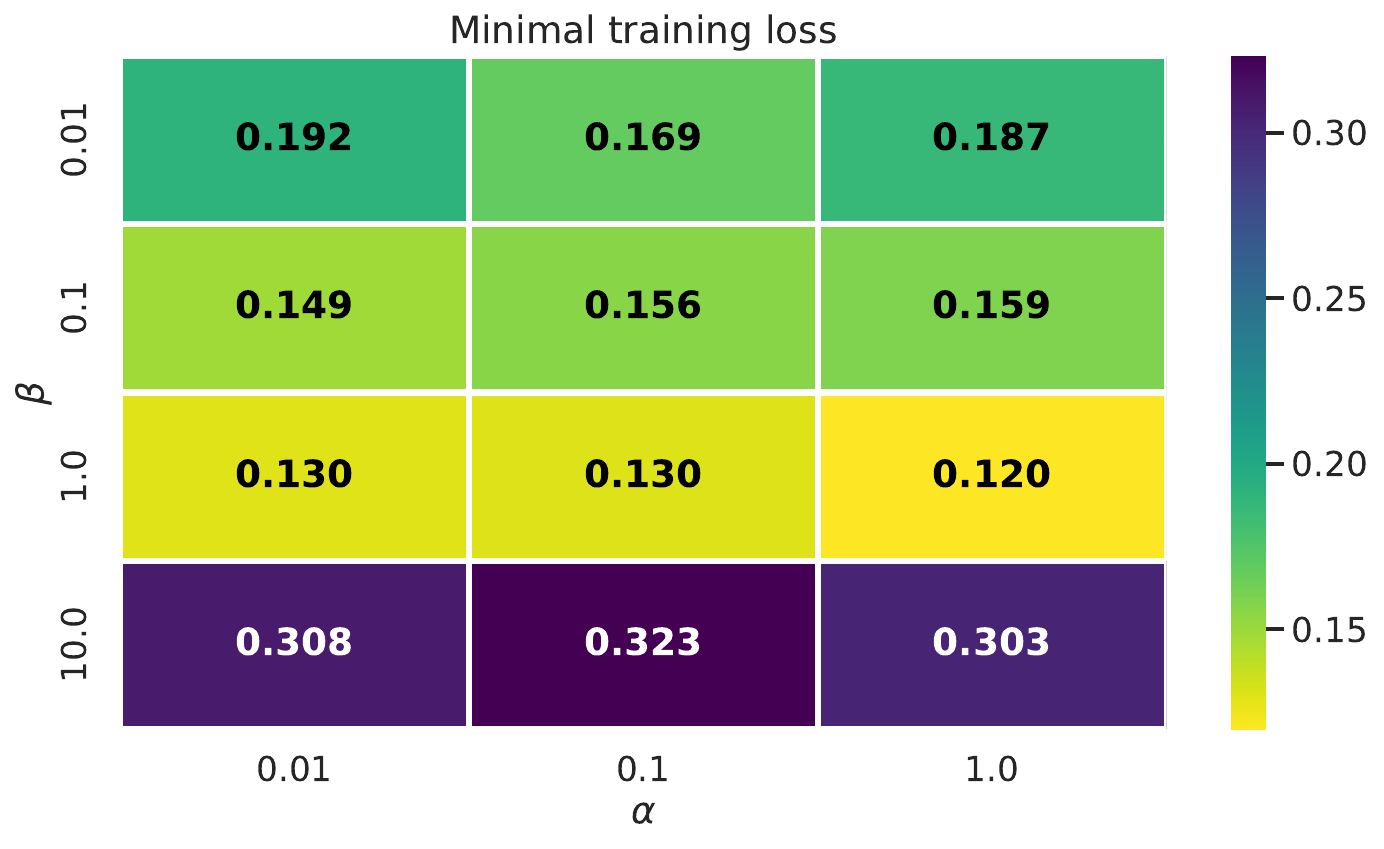}
    \label{fig:loss_heatmap}
\end{minipage}
\hfill
\begin{minipage}[c]{0.32\textwidth}
    \centering
    \includegraphics[width=\linewidth]{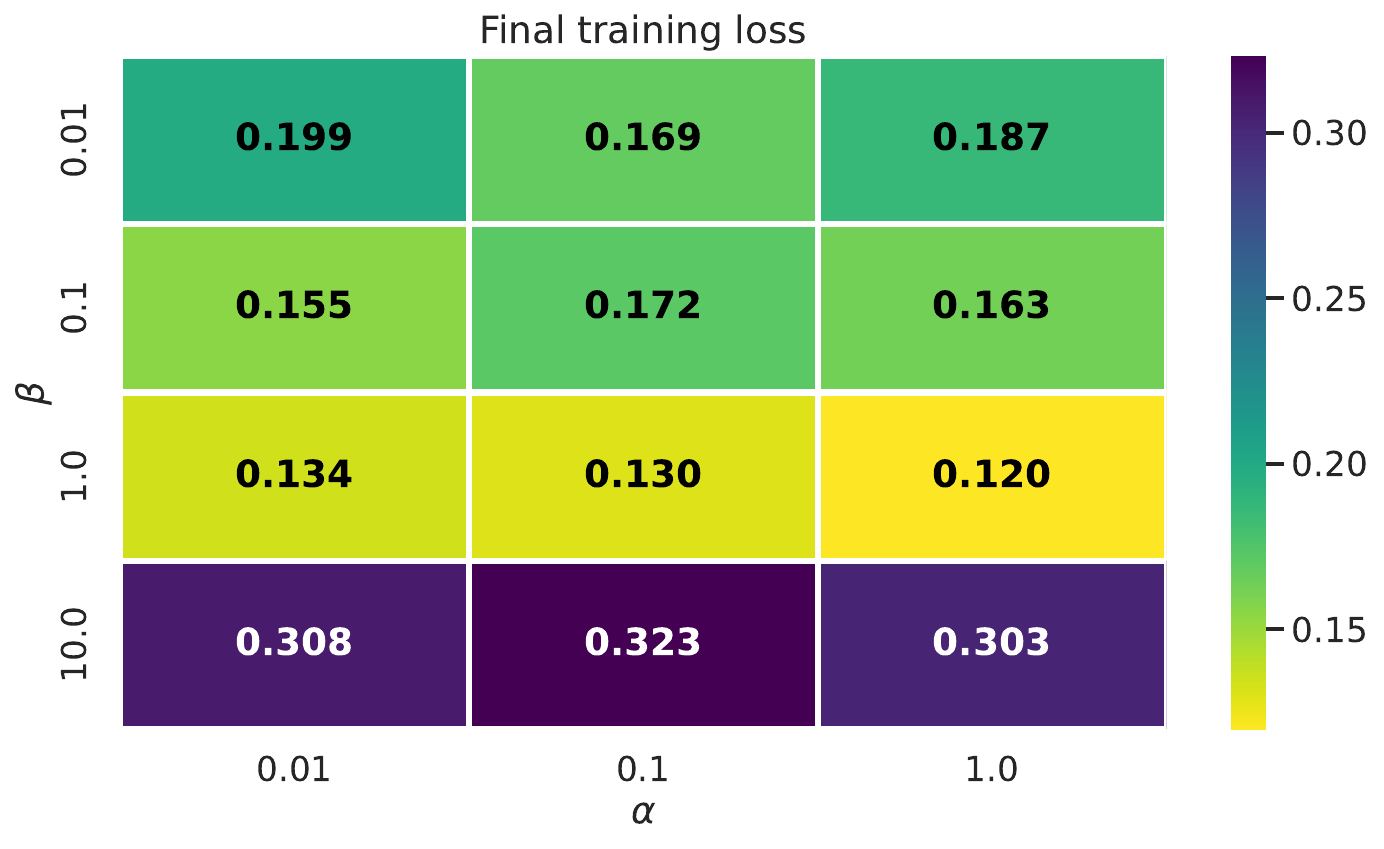}
    \label{fig:final_accuracy_heatmap}
\end{minipage}
\hfill
\begin{minipage}[c]{0.32\textwidth}
    \centering
    \includegraphics[width=\linewidth]{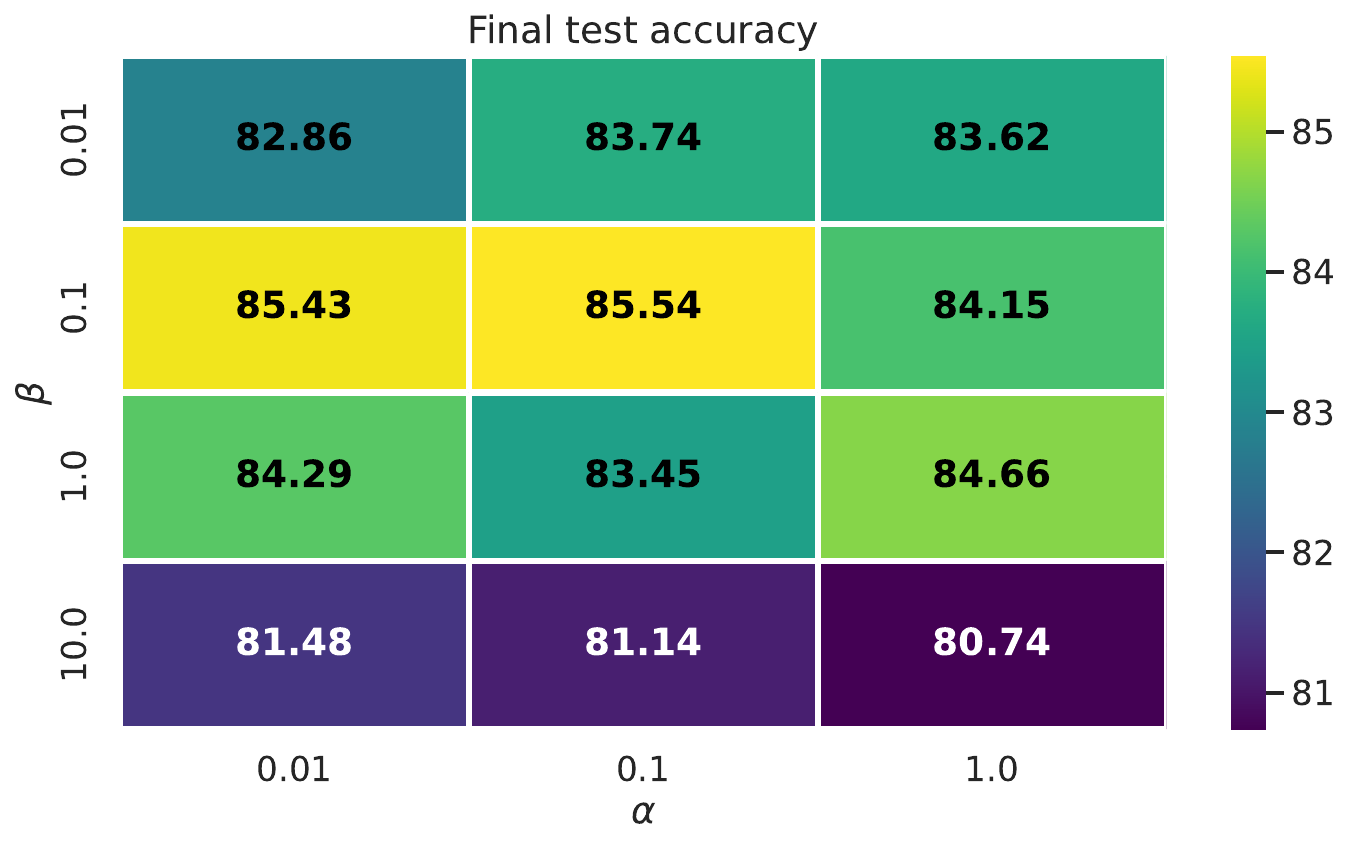}
    \label{fig:final_loss_heatmap}
\end{minipage}
\caption{\textbf{Minimal} train loss (left), \textbf{final} train loss (middle), and \textbf{final} test accuracy (right) by non-private \alg, after 300 communication rounds using a fine-tuned constant step size $\gamma$.} 
\label{fig:heatmap_non_private_other}
\end{figure}

\newpage

\begin{wrapfigure}{r}{0.5\textwidth}  
\vspace{-35pt}
\centering
\begin{tabular}{cccc}
\toprule
Method & $\beta$ & $\gamma$ & Final Accuracy \\
\midrule
\alg & 0.01 & 0.1 & 84.04\% \\
 & 0.1 & 0.1 & \textbf{86.09}\% \\
 & 1.0 & 0.1 & 84.80\% \\
 & 10.0 & 0.01 & 79.25\% \\
\midrule
\algname{DP-SGD} \eqref{eqn:DP_biased_GD} & 0.01 & 1.0 & 51.10\% \\
 & 0.1 & 1.0 & 79.68\% \\
 & 1.0 & 1.0 & 83.89\% \\
 & 10.0 & 0.1 & 84.50\% \\
\bottomrule
\end{tabular}
\caption{Best configurations and final test accuracies of 
\algname{DP-SGD} \eqref{eqn:DP_biased_GD}  and \alg (\ref{alg:norm_norm21_dp_v1})  without server normalization
in the non-private training.}
\label{tab:norm_comparison}
\vspace{-5pt}
\end{wrapfigure}

\subsubsection{Benefits of Error Compensation} \label{app:exp_ef}

Leveraging error compensation (EC), \alg without server normalization achieves superior performance compared to \algname{DP-SGD} with direct smoothed normalization across a range of $\beta$ and $\gamma$ hyperparameter settings (where $\alpha=0.01$), in terms of the final test accuracy reported in Figure \ref{fig:ef21_effect_loss} and Table \ref{tab:norm_comparison}.
From Table \ref{tab:norm_comparison}, \alg without server normalization consistently outperforms \algname{DP-SGD} across most combinations. This trend is particularly evident for small $\beta$ values ($\beta=0.01$), where \algname{DP-SGD} achieves only 51.10\% accuracy while \alg reaches 84.04\%.
The only exception is $\beta=10.0$, where \algname{DP-SGD} outperforms \alg. However, this combination is less practical in the private setting, as too high $\beta$ values imply high private noise, thus leading to slow algorithmic convergence.  

\subsubsection{Effect of Server Normalization} \label{app:exp_serv_norm}

We investigate the impact of server-side normalization (Line 11 in Algorithm \ref{alg:norm_norm21_dp_v1}) on the convergence performance of \alg. We reported training loss and test accuracy of \alg without and with server normalization in Figure \ref{fig:serv_norm} while summarizing their final test accuracy in Table \ref{tab:serv_norm_}.

\alg without server normalization generally achieves faster convergence in training loss and higher test accuracy than \alg with server normalization across varying $\beta$ values. 
Notably, at $\beta=0.1$, \alg without server normalization achieves the highest test accuracy of \textbf{86.09\%}.
Only at the large value of $\beta=10.0$ does server normalization improve the test accuracy of \alg without server normalization by approximately $2.2\%$.
\begin{figure}[h]
\begin{center}
\centerline{\includegraphics[width=0.9\linewidth]{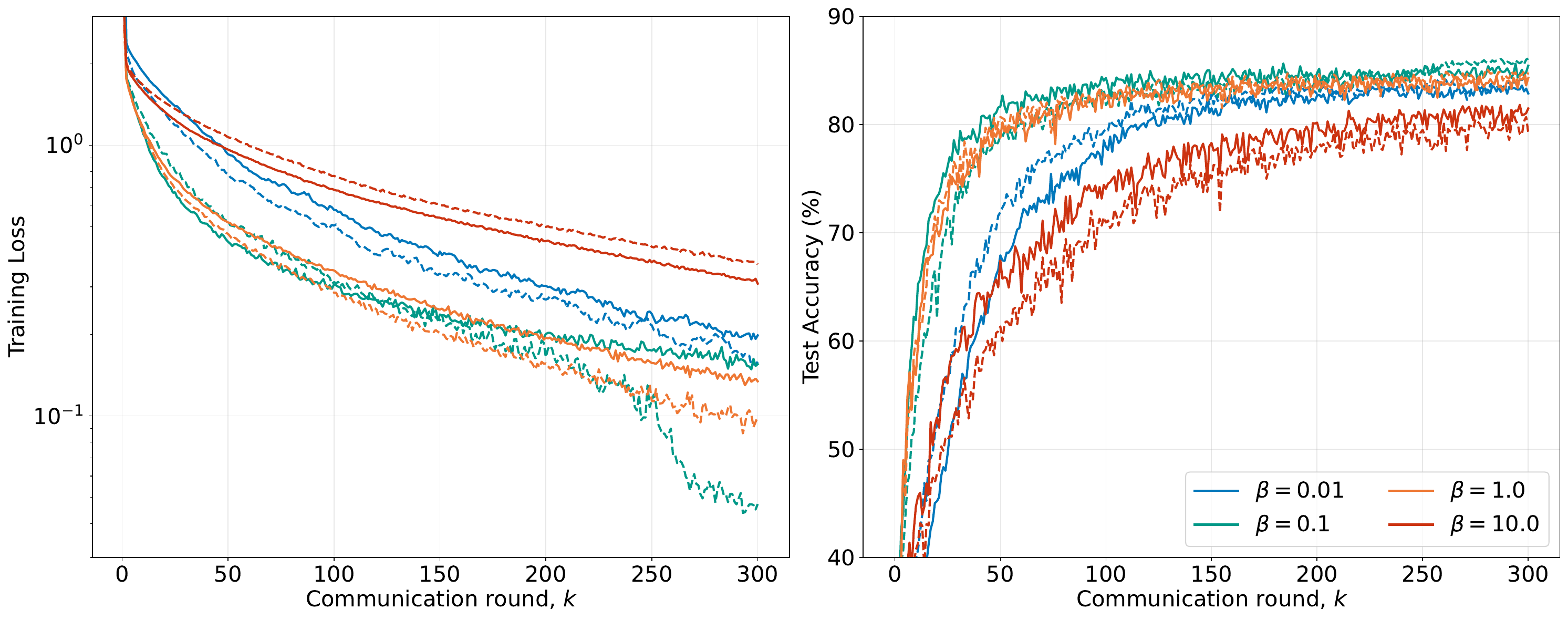}}
\caption{Training loss and test accuracy of \alg with [solid] and without [dashed] server normalization.}
\label{fig:serv_norm}
\end{center}
\vspace{-35pt}
\end{figure}

\begin{table}[h]
\centering
\begin{tabular}{cccc}
\toprule
Method: \alg & $\beta$ & $\gamma$ & Final Accuracy \\
\midrule
With server normalization & 0.01 & 0.01 & 82.86\% \\
 & 0.1 & 0.1 & 85.43\% \\
 & 1.0 & 0.1 & 84.29\% \\
 & 10.0 & 0.1 & 81.48\% \\
\midrule
Without server normalization & 0.01 & 0.1 & 84.04\% \\
 & 0.1 & 0.1 & \textbf{86.09}\% \\
 & 1.0 & 0.1 & 84.80\% \\
 & 10.0 & 0.01 & 79.25\% \\
\bottomrule
\end{tabular}
\caption{Best configurations and final test accuracies of \alg with and without server normalization.}
\label{tab:serv_norm_}
\end{table}

\subsubsection{Comparison of \algname{Clip21} and \alg} \label{app:exp_clip_norm}

\alg without server normalization\footnote{We ran \alg without server normalization because it showed better performance than \alg with server normalization according to Appendix \ref{app:exp_serv_norm}.} achieves comparable convergence performance to \algname{Clip21} for most $\beta$ values. 
At small $\beta$ values ($0.01,0.1$), \alg without server normalization attains slightly lower final test accuracy. However, at high $\beta=10.0$, \algname{Clip21} maintains the higher test accuracy, as the large clipping threshold effectively disables clipping. 
Furthermore, in most cases, both methods achieve their best performance with $\gamma=0.1$, except for \alg at $\beta=10.0$, where a smaller learning rate ($\gamma=0.01$) was optimal.

\begin{figure}[h]
\begin{center}
\centerline{\includegraphics[width=0.9\linewidth]{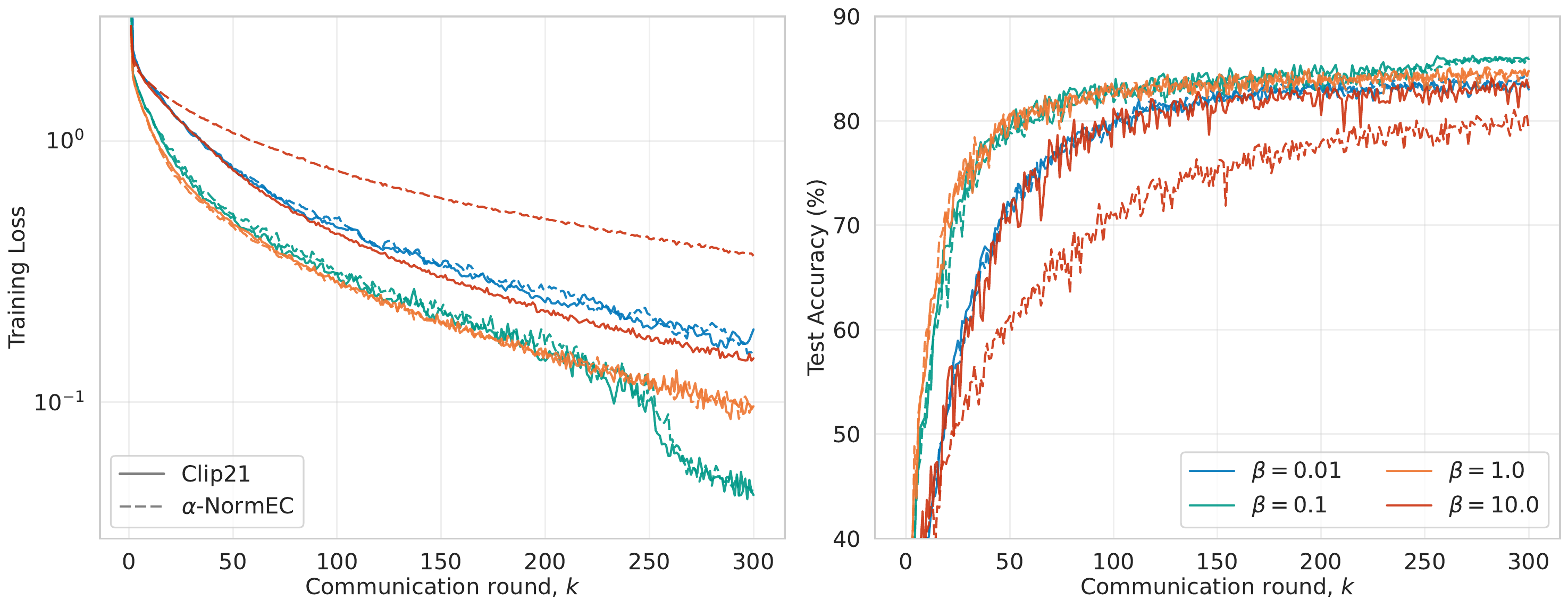}}
\caption{Training loss and test accuracy of \algname{Clip21} [solid] and \alg [dashed] without server normalization in the non-private training.}
\label{fig:clip_norm_comparison}
\end{center}
\end{figure}

\begin{figure}[h]
    \centering
    \begin{minipage}{0.48\textwidth}
        \centering
        \begin{tabular}{cccc}
            \toprule
            Method & $\beta$ & $\gamma$ & Final Accuracy \\
            \midrule
            \algname{Clip21} & 0.01 & 0.1 & 83.00\% \\
            & 0.1 & 0.1 & 85.91\% \\
            & 1.0 & 0.1 & 84.78\% \\
            & 10.0 & 0.1 & 83.19\% \\
            \midrule
            \alg & 0.01 & 0.1 & 84.04\% \\
            & 0.1 & 0.1 & \textbf{86.09}\% \\
            & 1.0 & 0.1 & 84.80\% \\
            & 10.0 & 0.01 & 79.25\% \\
            \bottomrule
        \end{tabular}
        \caption{Best configurations and final test accuracies.}
        \label{tab:clip_norm_comparison}
    \end{minipage}
    \hfill
    \begin{minipage}{0.5\textwidth}
        \centering
        \includegraphics[width=\linewidth]{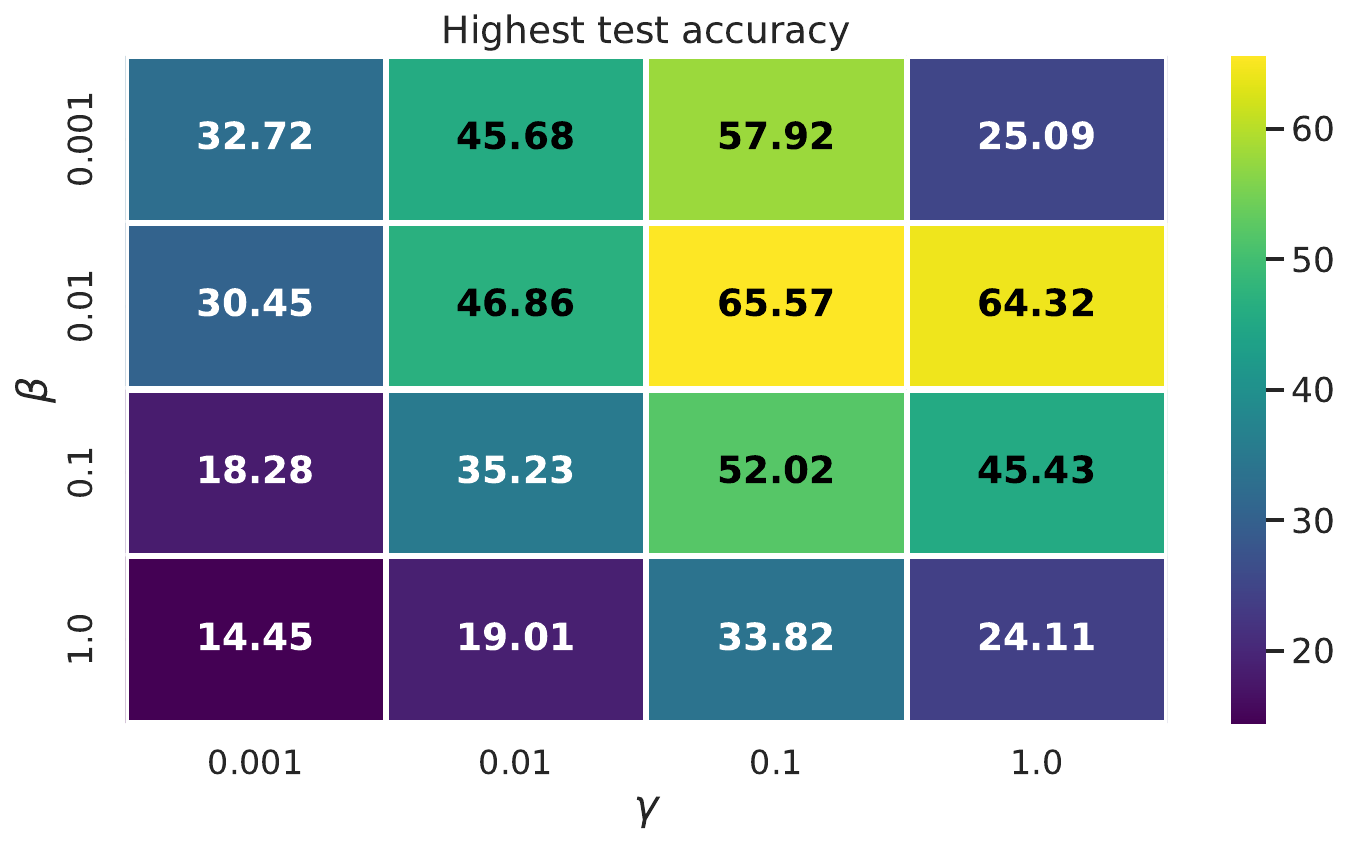}
        \caption{Best test accuracy of \algname{DP-}\alg across different $\beta, \gamma$ values.}
        \label{fig:dp_150}
    \end{minipage}
\end{figure}

\subsection{Shorter Private Training}

We present additional results in Figures \ref{fig:dp_loss} and \ref{fig:dp_150} by running \algname{DP-}\alg for 150 communication rounds. The step size $\gamma$ is tuned for every parameter $\beta$. In the non-private setting, longer training is beneficial. However, in the private scenario, it may not hold due to increased noise variance as it scales with a number of iterations. Interestingly, we observe that for $\beta=1$, the highest achieved accuracy after 150 iterations is almost the same as after a doubled communication budget of 300 iterations.

\begin{figure}[h]
\begin{minipage}[c]{0.49\textwidth}
\begin{center}
\centerline{\includegraphics[width=\linewidth]{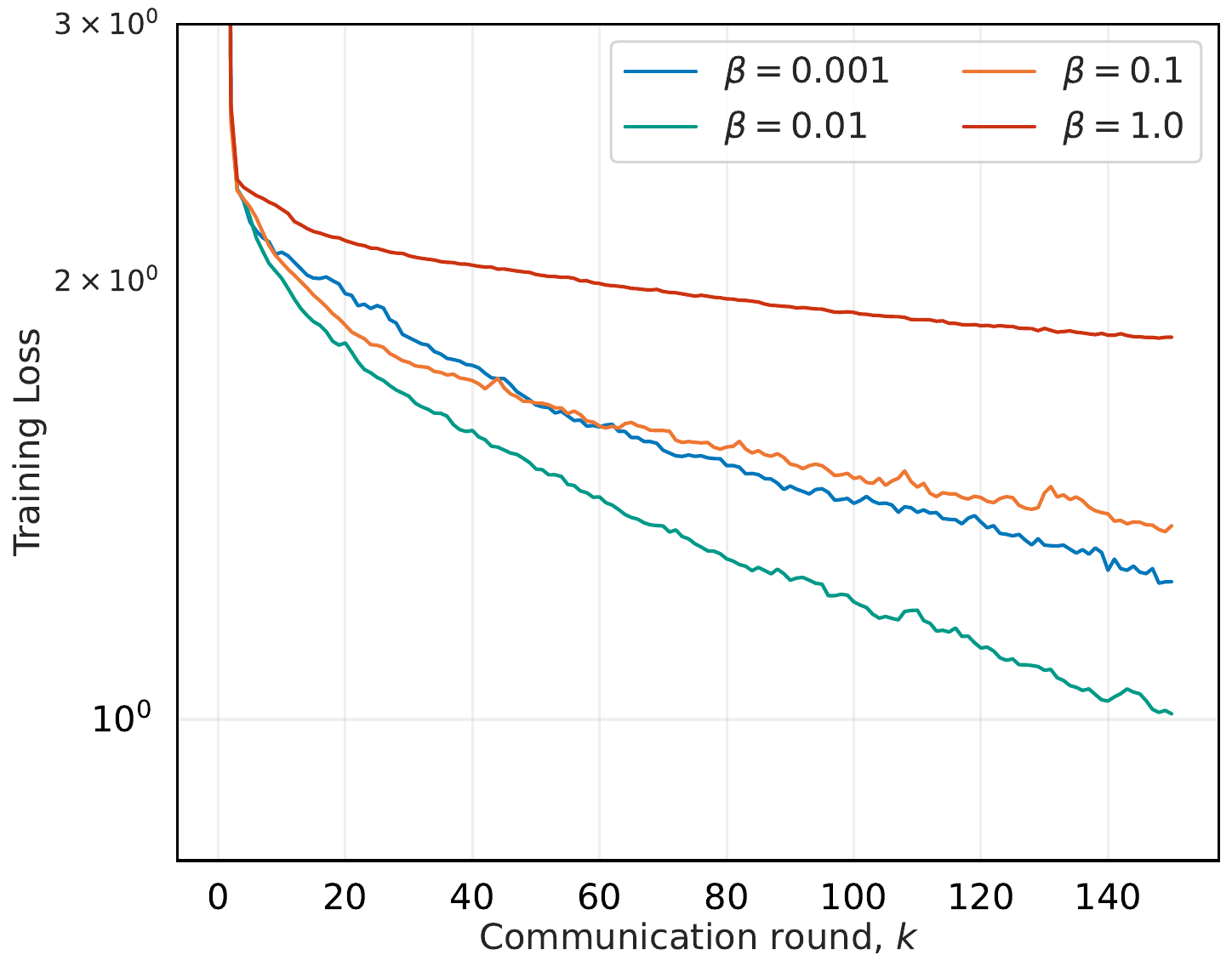}}
\end{center}
\end{minipage}
\hfill
\begin{minipage}[c]{0.49\textwidth}
\begin{center}
\centerline{\includegraphics[width=\linewidth]{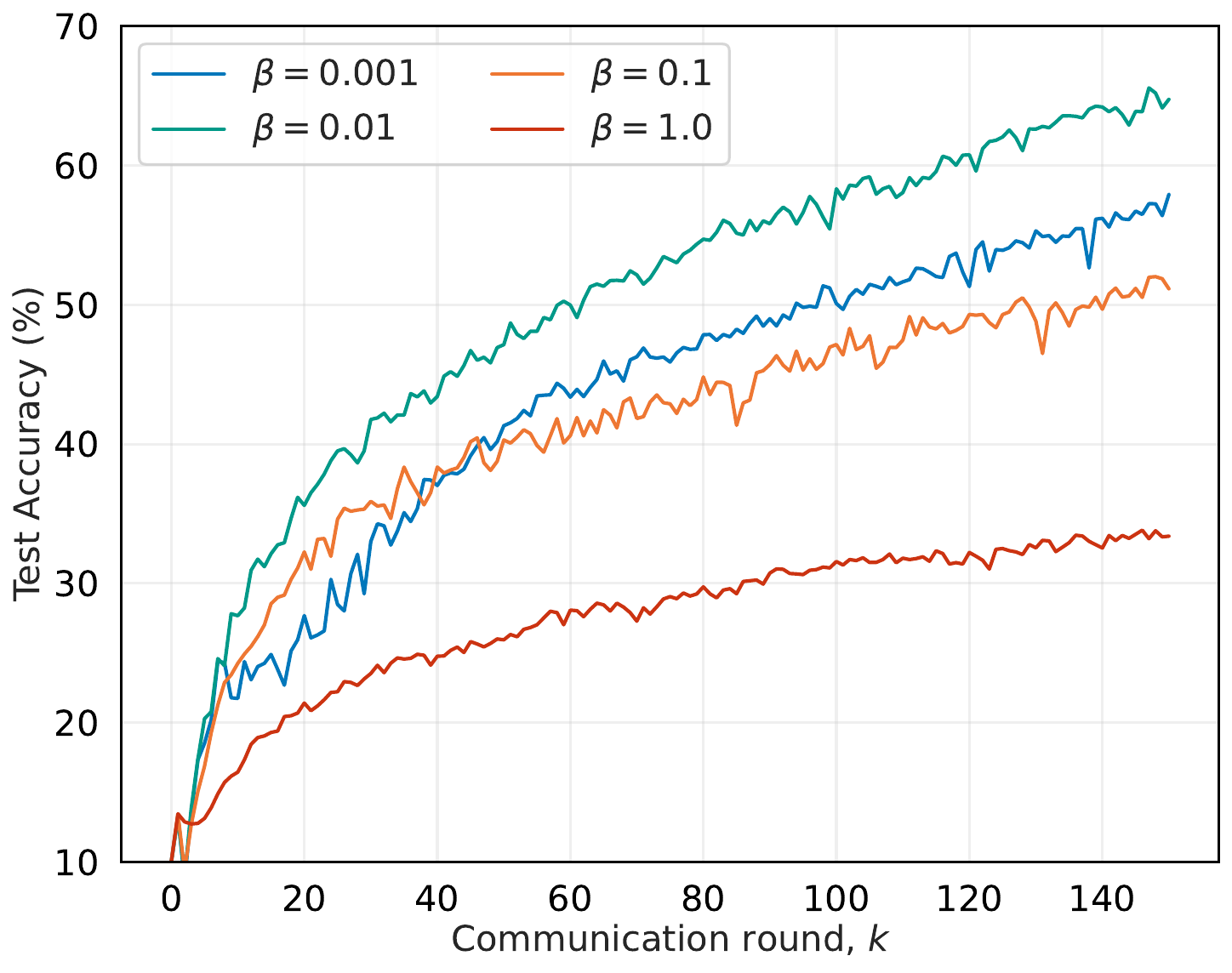}}
\end{center}
\end{minipage}
\caption{Training loss and test accuracy of \algname{DP-}\alg across different $\beta$ values.}
\label{fig:dp_loss}
\end{figure}

\end{document}